\newcommand{\Lc}{\mathcal{L}}
\newcommand{\Dc}{\mathcal{D}}
\newcommand{\Sc}{\mathcal{S}}
\newcommand{\un}{\text{un}}
\newcommand{\unm}{\mathrm{un}}
\newcommand{\supp}{\text{sup}}
\newcommand{\bW}{\bar{W}}
\newcommand{\pos}{\mathrm{pos}}
\newcommand{\dat}{\mathrm{data}}
\newcommand{\E}{\mathbb{E}}
\newcommand{\Nc}{\mathcal{N}}
\newcolumntype{Y}{>{\centering\arraybackslash}X}
\newcommand{\iid}{\overset{\mathclap{\text{i.i.d.}}}{\sim}}
\newcommand{\norm}[1]{\Vert #1\Vert}
\theoremstyle{plain}
\newtheorem{thm}{\protect\theoremname}
\theoremstyle{plain}
\newenvironment{proof}[1][\protect\proofname]{\par
	\normalfont\topsep6\p@\@plus6\p@\relax
	\trivlist
	\itemindent\parindent
	\item[\hskip\labelsep\scshape #1]\ignorespaces
}{%
	\endtrivlist\@endpefalse
}
\providecommand{\proofname}{Proof}
\providecommand{\lemmaname}{Lemma}
\providecommand{\theoremname}{Theorem}
\ificcvfinal\pagestyle{empty}\fi
\begin{document}

\title{Sharpness \& Shift-Aware Self-Supervised Learning}

\author{Ngoc N. Tran\\
VinAI Research
\and
Son Duong\\
VinAI Research
\and
Hoang Phan\\
VinAI Research
\and
Tung Pham\\
VinAI Research
\and
Dinh Phung\\
Monash University
\and
Trung Le\\
Monash University
}

\maketitle
\ificcvfinal\thispagestyle{empty}\fi

\begin{abstract}
Self-supervised learning aims to extract meaningful features from unlabeled data for further downstream tasks. In this paper, we consider classification as a downstream task in phase 2 and develop rigorous theories to realize the factors that implicitly influence the general loss of this classification task. Our theories signify that sharpness-aware feature extractors benefit the classification task in phase 2 and the existing data shift between the ideal (i.e., the ideal one used in theory development) and  practical (i.e., the practical one used in implementation) distributions to generate positive pairs also remarkably affects this classification task. Further harvesting these theoretical findings, we propose to minimize the sharpness of the feature extractor and a new Fourier-based data augmentation technique to relieve the data shift in the distributions generating positive pairs, reaching Sharpness $\&$ Shift-Aware Contrastive Learning (SSA-CLR). We conduct extensive experiments to verify our theoretical findings and demonstrate that sharpness $\&$ shift-aware contrastive learning can remarkably boost the performance as well as obtaining more robust extracted features compared with the baselines. The code for our experiments is publicly available at \url{https://anonymous.4open.science/r/ssa-clr}.
\end{abstract}

\section{Introduction}
Self-supervised learning (SSL) aims to extract useful representations from the input data without relying on human annotations, hence making model training more economic and efficient. Recent advances ~\cite{he2019moco, chen2020simple, grill2020byol, chen2020simsiam, caron2020unsupervised} in SSL show  excellent empirical evidences in various downstream tasks ranging from classification, object detection to instance segmentation with the comparable or even exceeding performance to supervised approaches.  

Contrastive learning (CLR) \cite{he2019moco, chen2020simple, misra2019contrastive, tian2019contrastive} is an essential technique in SSL in which positive and negative examples are created for each given anchor example. A feature extractor then learns to align representations of the anchors and their positive examples, while trying to contrast those of the anchors and their negative examples. SimCLR \cite{chen2020simple} is a pioneering work that proposed a simple yet but efficient technique to train a feature extractor with contrastive learning. In SimCLR, given an anchor, the positive examples are created using random data augmentations sampled from a pool $\mathcal{T}$ of data augmentations, while negative examples are simply sampled from data distribution. The InfoNCE loss is subsequently employed to train a feature extractor by aligning representations of positive pairs, while contrasting those of negative examples.

Inspired by surprising successes of SimCLR and other CLR techniques \cite{he2019moco, chen2020simple, misra2019contrastive, tian2019contrastive}, several works \cite{psaunshi19a, wang2020understanding, wang2022chaos} dedicated to study contrastive learning with the InfoNCE loss from a theoretical perspective. Specifically, \cite{psaunshi19a} established a connection between the general classification loss and the unsupervised loss in the context of binary classification using Rademacher complexity. Additionally, \cite{wang2020understanding} studied the distribution of the representations on the unit sphere and empirically and theoretically demonstrated that the representations generally tend to be uniformly distributed on the unit sphere while still maintaining the closeness of positive examples and their anchors. Recently, \cite{wang2022chaos} relieved the assumption made in \cite{wang2020understanding} to develop a connection between the general classification loss and the unsupervised loss in the context of multi-class classification. Furthermore, this work also indicated that the gap between two aforementioned losses would be further reduced if the pool of data augmentations $\mathcal{T}$ satisfies intra-class connectivity, which unfortunately is impossible to be realized without label information. Additionally, although providing more insightful understanding of contrastive learning with the InfoNCE loss, the aforementioned works need to make assumptions to some extent and none of them could yield any practical outcomes that help to improve the performance of contrastive learning.  

In this paper, sticking with the real-world setting contrastive learning without making any assumptions, we devise theories that glue the general supervised classification loss and the sharpness-aware unsupervised InfoNCE loss. Interestingly, our theories suggest that a sharpness-aware feature extractor with the InfoNCE loss can help to improve the generalization ability of the linear classifier in the classification downstream task. Moreover, through the developed theories, we observe that the data shift between the ideal and practical distributions to generate positive pairs also influences the performance of the classification downstream task. Harvesting this theoretical finding, we propose a Fourier-based data augmentation technique operated on top of data augmentations in SimCLR to exploit the inherent shift between the two aforementioned distributions. The underlying idea of our proposed Fourier-based data augmentation technique is to base on the feature extractor to find out the most likely same-label data example in the current batch for a given anchor. A Fourier transform \cite{Xu_2021_CVPR} is then applied to keep intact the foreground information of the anchor, while mixing up the background information of two relevant images. By this way, we expect that the new augmented image can capture better the space of the corresponding class, which would benefit the classification performance as suggested by our theories.

Finally, our contributions in this paper can be summarized as follows:
\begin{itemize}
    \item We develop rigorous theories for contrastive learning with the InfoNCE loss in the real-world setting without making any assumptions. Different from previous works \cite{psaunshi19a, wang2020understanding, wang2022chaos} , our theories establish a connection between the general supervised classification loss and the sharpness-aware unsupervised InfoNCE loss, hinting that minimizing the sharpness-aware unsupervised InfoNCE loss helps boosting classification performance. Still, our theories reveal the influence of the data shift between the ideal and practical distributions to generate positive pairs to the classification performance. 
    \item We harvest the sharpness-aware unsupervised InfoNCE loss and data shift to propose Sharpness \& Shift-Aware Contrastive Learning (SSA-CLR) to improve contrastive learning with the InfoNCE loss.
    \item We conduct experiments on real-world datasets to study the behaviors of our proposed components and compare our SSA-CLR with other baselines namely SimCLR \cite{chen2020simple}, Debiased CLR \cite{chuang2020debiased}, and Hard Negative Example Mining CLR \cite{robinson2020hard}, the works aim at improving contrastive learning with the InfoNCE loss. The experimental results show that our SSA-CLR significantly outperforms the baselines in the classification performance, while obtaining more robust features which are less vulnerable to adversarial attacks such as FGSM \cite{43405}.
\end{itemize}

\section{Related Works}
\subsection{Self-Supervised Learning}

Self-supervised learning is a learning paradigm that aims to learn meaningful representations of the input data without relying on human annotations. Recent advances in self-supervised learning for visual data~\cite{he2019moco, chen2020simple, grill2020byol, chen2020simsiam, caron2020unsupervised}  have demonstrated that these representations can be as effective as supervised representations in a range of downstream tasks, including classification, object detection, and instance segmentation.  The principle of these methods is finding representations that are invariant under different data augmentations. This is achieved by maximizing similarity of representations obtained from different augmented samples of an image. However, this process can potentially result in mode collapse~\cite{Jing2021UnderstandingDC}, where all images are mapped to the same representation through the network. To address this issue, several methods have been developed to learn more useful representations \cite{he2019moco, chen2020simple}.

Contrastive methods \cite{he2019moco, chen2020simple, misra2019contrastive, tian2019contrastive} create positive and negative pairs commonly by augmentations, and utilize InfoNCE loss which encourages representations of positive pairs to align and spreading which of negative pairs apart. However, these methods often require comparing each image with many others to achieve good results. In another recent line of work, BYOL~\cite{grill2020byol} and SimSiam~\cite{chen2020simsiam} only used positive pairs in the loss function, in which a special ``predictor''
network learns to output predictions aligning with the stop-gradient projections of another existing model. Meanwhile, SwAV~\cite{caron2020unsupervised} did not directly compare image features. Alternatively, it assigned augmentations from the same image to clusters, then simultaneously tried to enforce consistency between these clusters.


In typical contrastive learning methods, positive samples are obtained through  augmentation of the same images, while negative samples are selected from the remaining data. However, a potential issue arises when negative samples with the same label as the anchor sample are included in the selection process. To address this issue, a partial solution has been proposed by \cite{chuang2020debiased}, which introduced a distribution over the negative samples to correct potential biases and improve the quality of the representations. Following this work, \cite{robinson2020hard} incorporated similarity between the negative samples and the anchor into the sampling distribution to generate negative samples closer to the anchor, further enhancing the training process.

Driven by good empirical results of contrastive learning, several works have studied this learning paradigm from a theoretical perspective \cite{psaunshi19a, wang2020understanding, wang2022chaos}. Specifically, \cite{psaunshi19a} established the connection between the supervised and unsupervised losses using Rademacher complexity in the context of binary classification. Additionally, \cite{wang2020understanding} studied the distribution of latent representations over the unit sphere, and demonstrated that these latent representations tend to be uniformly distributed and encourage alignment between positive examples and their anchors. More recently, \cite{wang2022chaos} replaced the conditional independence assumption in \cite{psaunshi19a} by a milder condition, devised a connection between supervised and unsupervised losses, and rethought a new augmentation overlap theory for contrastive learning. 

Compared to existing works, ours is the first work that establishes the connection between the supervised classification loss and the sharpness-aware unsupervised InfoNCE loss without making any assumptions. Moreover, our theories reveal a distribution shift between the ideal and practical distributions to generate positive pairs which can be mitigated for improving supervised performances. More importantly, different from the aforementioned theoretical works, we harvest the theories to propose sharpness-aware contrastive learning and a novel Fourier-based data augmentation technique to significantly boost the classification performance and the robustness of the extracted features.

\subsection{Flat Minima}

Recent research in deep learning has focused on the importance of flat minima in improving the generalization ability of neural networks. Several seminal studies have demonstrated that wider local minima can lead to more robust models that are less likely to overfit or perform poorly on unseen data \cite{DBLP:conf/iclr/JiangNMKB20, DBLP:conf/nips/PetzkaKASB21, DBLP:conf/uai/DziugaiteR17}. To this end, various methods have been proposed to seek flat minima ~\cite{DBLP:conf/iclr/PereyraTCKH17, Chaudhari2017EntropySGDBG, DBLP:conf/iclr/KeskarMNST17, DBLP:conf/uai/IzmailovPGVW18, foret2021sharpnessaware}, including techniques that manipulate training factors such as batch-size, learning rate, dropout, and the covariance of gradient, as well as regularization techniques such as low entropy penalty \cite{DBLP:conf/iclr/PereyraTCKH17} and distillation losses ~\cite{DBLP:conf/iclr/KeskarMNST17,Jastrzebski2017ThreeFI, wei2020implicit}. In another line of work, \cite{DBLP:conf/uai/IzmailovPGVW18} demonstrated  that averaging model weights during training can yield flatter minima, which can lead to better generalization.

Motivated by the strong connection between sharpness of a minima and generalization
ability,  Sharpness-Aware Minimization (SAM) \cite{foret2021sharpnessaware} has emerged as a prominent approach for explicitly seeking flat regions in the loss landscape. SAM has been shown to be effective and scalable in a wide range of tasks and domains, including meta-learning~\cite{abbas2022sharp}, federated learning~\cite{qu2022generalized}, vision models~\cite{chen2021vision}, language models~\cite{bahri-etal-2022-sharpness}, domain generalization~\cite{cha2021swad}, and multi-task learning \cite{phan2022improving}. Other works have attempted to further enhance the effectiveness of SAM by exploiting its geometry \cite{kwon2021asam}, minimizing surrogate gap \cite{zhuang2022surrogate}, and speeding up training time \cite{du2022sharpness, liu2022towards}.

\section{Problem Formulation and Notions}
In this section, we present the problem formulation of self-supervised learning and the notions used in our following theory development.

We consider an $M$-class classification problem with the label set
$\mathcal{Y}=\left\{ 1,2,...,M\right\} $. Given a class $c\in\mathcal{Y}$,
the class-condition distribution for this class has the density function
$p_{c}\left(x\right)=p\left(x\mid y=c\right)$ where $x\in\mathbb{R}^{d}$
specifies a data example. Therefore, the entire data distribution
has the form
\[
p_{\text{data}}\left(x\right)=\sum_{c=1}^{M}\pi_{c}p\left(x\mid y=c\right)=\sum_{c=1}^{M}\pi_{c}p_{c}\left(x\right),
\]
where $\pi_{c}=\mathbb{P}\left(y=c\right),c\in\mathcal{Y}$ is a class
probability. 

The ideal distribution of positive pairs over $\mathbb{R}^{d}\times\mathbb{R}^{d}$
is formulated as 
\[
p_{\text{pos}}\left(x,x^{+}\right)=\sum_{c=1}^{M}\pi_{c}p_{c}\left(x\right)p_{c}\left(x^{+}\right).
\]
It is worth noting that with the above equality, $p_{\text{pos}}\left(x,x^{+}\right)$
is relevant to the probability that $x,x^{+}\sim p_{\text{data}}$
have the same label. Particularly, to form a positive pair $\left(x,x^{+}\right)$,
we first sample a class $c\sim\text{Cat}\left(\pi\right)$ from the
categorical distribution with $\pi=\left[\pi_{c}\right]_{c=1}^{M}$,
and then sample $x,x^{+}\sim p_{c}$. We now depart from $p_{\text{pos}}\left(x,x^{+}\right)$, the ideal distribution
of positive pairs to the practical distribution. Subsequently, we extend our theory to target a practical distribution of positive pairs $\tilde{p}_{\text{pos}}\left(x,x^{+}\right)$
whose samples are from random augmentations as in SimCLR \cite{chen2020simple}. 

The general unsupervised InfoNCE loss over the entire data and positive
pair distributions is denoted as 
{\small{}
\begin{align*}
 & \mathcal{L}_{\mathcal{D}_{\text{un}}}^{\text{un}}\left(\theta,p_{\text{pos}}\right)=\mathbb{E}_{\left(x,x^{+}\right)\sim p_{\text{pos}},x_{1:K}^{-}\iid p_{\text{data}}}\Biggl[\\
 & -\log\frac{\exp\left\{ \frac{f_{\theta}\left(x\right)\cdot f_{\theta}\left(x^{+}\right)}{\tau}\right\} }{\exp\left\{ \frac{f_{\theta}\left(x\right)\cdot f_{\theta}\left(x^{+}\right)}{\tau}\right\} +\frac{\beta}{K}\sum_{k=1}^{K}\exp\left\{ \frac{f_{\theta}\left(x\right)\cdot f_{\theta}\left(x_{k}^{-}\right)}{\tau}\right\} }\Biggr],
\end{align*}
}
where $f_{\theta}$ with $\theta\in\Theta$ is a feature extractor, the operation $f_{\theta}(x) \cdot f_{\theta}(\widetilde{x})$ means the inner product, $\tau>0$
is a temperature variable, $K$ is the number of negative examples
used, and $\mathcal{D}_{\text{un}}$ is the distribution over $z=\left[x,x^{+},\left[x_{k}^{-}\right]_{k=1}^{K}\right]$
with $\left(x,x^{+}\right)\sim p_{\text{pos}},x_{1:K}^{-} \sim p_{\text{data}}$.
Note that $\beta \geq 0$ is a parameter and setting $\beta=K$ recovers
the original formula of contrastive learning.

It is our ultimate goal to minimize the general unsupervised InfoNCE loss.
However, in reality, we work with a specific training set $\mathcal{S}=\Big\{ z_{i}=\Big[x_{i},x_{i}^{+},\big[x_{ik}^{-}\big]_{k=1}^{K}\Big]\Big\} _{i=1}^{N}$
where $z_{1:N}\sim\mathcal{D}_{\text{un}}$. The empirical unsupervised InfoNCE
loss over $\mathcal{S}$ is defined as
{\small{}
\begin{align*}
 & \mathcal{L}_{\mathcal{S}}^{\text{un}}\left(\theta,p_{\text{pos}}\right)=-\frac{1}{N}\times\\
 & \sum_{i=1}^{N}\log\frac{\exp\left\{ \frac{f_{\theta}\left(x_{i}\right)\cdot f_{\theta}\left(x_{i}^{+}\right)}{\tau}\right\} }{\exp\left\{ \frac{f_{\theta}\left(x_{i}\right)\cdot f_{\theta}\left(x_{i}^{+}\right)}{\tau}\right\} +\frac{\beta}{K}\sum_{k=1}^{K}\exp\left\{ \frac{f_{\theta}\left(x_{i}\right)\cdot f_{\theta}\left(x_{ik}^{-}\right)}{\tau}\right\} }.
\end{align*}
}

SSL aims to minimize the empirical unsupervised InfoNCE loss
over a specific training set $\mathcal{S}$ to learn an optimal feature extractor $f_{\theta^{*}}$ which will be used in
the second phase, wherein we train a linear classifier on top of the
features extracted by $f_{\theta^{*}}$. Given a feature extractor
$f_{\theta}$ and a weight matrix $W$ parameterized a linear classifier,
we define the general loss induced by this couple as
\begin{align*}
\mathcal{L}_{\mathcal{D}_{\text{sup}}}^{\text{sup}}\left(\theta,W\right)=\mathbb{E}_{\left(x,y\right)\sim\mathcal{D}_{\text{sup}}}\left[\tau_\text{CE}\left(Wf_{\theta}\left(x\right),y\right)\right],
\end{align*}
where $\mathcal{D}_{\text{sup}}\left(x,y\right)=\pi_{y}p_{y}\left(x\right)$
is the data-label distribution and $\tau_\text{CE}\left(\cdot,\cdot\right)$
is the $\tau$-temperature cross-entropy loss (i.e., softmax with temperature $\tau$ applied to logits before computing the cross-entropy loss). 

Given the fact that we aim to train the optimal linear classifier
in the phase 2, we define the optimal general loss over all weight
matrices $W$ as
\begin{align*}
\mathcal{L}_{\mathcal{D}_{\text{sup}}}^{\text{sup}}\left(\theta\right)=\min_{W}\mathcal{L}_{\mathcal{D}_{\text{sup}}}^{\text{sup}}\left(\theta,W\right).
\end{align*}

\section{Sharpness \& Shift-Aware Self-Supervised Learning}
In what follows, we present our theory development for SSL. We first establish the relevant theories for the ideal distribution $p_{\mathrm{pos}}(x, x^+)$ to generate positive pairs. Particularly, we specify a connection between the general supervised loss $\mathcal{L}_{\mathcal{D}_{\mathrm{sup}}}^{\mathrm{sup}}\left(\theta\right)$  in the second phase and the general unsupervised InfoNCE loss $\mathcal{L}_{\mathcal{D}_{\mathrm{un}}}^{\mathrm{un}}\left(\theta,p_{\mathrm{pos}}\right)$ in the first phase. From this connection, we devise a new theory to connect the general unsupervised loss $\mathcal{L}_{\mathcal{D}_{\mathrm{un}}}^{\mathrm{un}}\left(\theta,p_{\mathrm{pos}}\right)$ and the empirical sharpness-aware unsupervised InfoNCE loss  
$\max_{\theta':\Vert\theta^{\prime}-\theta \Vert<\rho}\mathcal{L}_{\mathcal{S}}^{\mathrm{un}}\left(\theta,\tilde{p}_{\mathrm{pos}}\right)$.

The above theories are developed for the ideal distribution $p_{\mathrm{pos}}(x, x^+)$ to generate positive pairs. By noticing that in practice, due to the lack of labels, we use the practical distribution $\tilde{p}_{\mathrm{pos}}(x, x^+)$ as an approximation of the ideal one to generate positive pairs based on random augmentations, we further extend our theories for the practical case when the practical distribution $\tilde{p}_{\mathrm{pos}}(x, x^+)$ is employed. Interestingly, our theory development emerges a term, standing for the shift between the ideal and practical distributions to generate positive pairs. Furthermore, hinted by the emerging positive distribution shift term, we propose a new Fourier-based random data augmentation technique to improve the performance of SSL approaches. 

To summarize, apart from the rigorous theory development to obtain insightful understanding of the factors that influence the performance of SSL, we harvest our theories to propose (i) sharpness-aware SSL and (ii) a new Fourier-based random data augmentation technique to further improve the performance of SSL, which can be empirically demonstrated via our experiments.

\subsection{Theory Development for the Ideal Distribution} \label{sec:ideal_case}
We now develop theories for the case of using the ideal distribution $p_{\mathrm{pos}}(x, x^+)$ to generate the positive pairs. Particularly, in Theorem \ref{thm:sup_un}, we indicate an upper-bound for the general supervised loss $\mathcal{L}_{\mathcal{D}_{\mathrm{sup}}}^{\mathrm{sup}}\left(\theta\right)$ which is relevant to the general unsupervised loss $\mathcal{L}_{\mathcal{D}_{\mathrm{un}}}^{\mathrm{un}}\left(\theta,p_{\mathrm{pos}}\right)$.

\begin{thm}
\label{thm:sup_un}The following inequality holds
\begin{align}
\mathcal{L}_{\mathcal{D}_{\mathrm{sup}}}^{\mathrm{sup}}\left(\theta\right) & \leq\mathcal{L}_{\mathcal{D}_{\mathrm{un}}}^{\mathrm{un}}\left(\theta,p_{\mathrm{pos}}\right)-O\left(\frac{1}{\sqrt{K}}\right)\nonumber \\
 & -\log\beta -O\Big(\frac{1}{\beta}\Big).\label{eq:Lsup_bound_Lun}
\end{align}
\end{thm}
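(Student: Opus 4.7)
The plan is to upper bound $\mathcal{L}^{\mathrm{sup}}_{\mathcal{D}_{\mathrm{sup}}}(\theta)$ by evaluating it at an explicit linear classifier built from the feature extractor's class-conditional means, and then to lower bound the InfoNCE loss $\mathcal{L}^{\mathrm{un}}_{\mathcal{D}_{\mathrm{un}}}(\theta, p_{\mathrm{pos}})$ by the same quantity. The gap between the two should account for exactly the $\log\beta$ shift (coming from the $\beta/K$ prefactor inside the InfoNCE denominator), an $O(1/\beta)$ residual from treating that denominator as dominated by the negative sum, and an $O(1/\sqrt{K})$ residual from Monte Carlo concentration over the $K$ negatives.

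\textbf{Steps.} I would first define $\mu_c := \mathbb{E}_{x\sim p_c}[f_\theta(x)]$ and take $W^\star$ to be the ``mean classifier'' whose $c$-th row is $\mu_c$, possibly offset by a $\tau\log\pi_c$ bias to carry the class prior. Since $\mathcal{L}^{\mathrm{sup}}_{\mathcal{D}_{\mathrm{sup}}}(\theta) \leq \mathcal{L}^{\mathrm{sup}}_{\mathcal{D}_{\mathrm{sup}}}(\theta, W^\star)$, this witness suffices. Expanding the $\tau$-temperature cross-entropy and using the identity $\mathbb{E}_{(x,x^+)\sim p_{\mathrm{pos}}}[f_\theta(x) \cdot f_\theta(x^+)] = \sum_c \pi_c \|\mu_c\|^2 = \mathbb{E}_{(x,y)\sim \mathcal{D}_{\mathrm{sup}}}[\mu_y \cdot f_\theta(x)]$ makes the positive-pair inner product in $\mathcal{L}^{\mathrm{un}}$ match the ``numerator'' contribution of $\mathcal{L}^{\mathrm{sup}}(\theta, W^\star)$ exactly. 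On the InfoNCE side, I would rewrite the loss as $\mathbb{E}[\log(1+U)]$ with $U := \frac{\beta}{K}\sum_k e^{(f_\theta(x)\cdot f_\theta(x_k^-) - f_\theta(x)\cdot f_\theta(x^+))/\tau}$ and apply the two-sided bound $\log U \leq \log(1+U) \leq \log U + 1/U$; provided the feature norm is bounded (so that $U = \Omega(\beta)$ uniformly), this peels off $\log\beta$ and leaves an $O(1/\beta)$ residual.

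What remains is to match the log-partition of $W^\star$ to the quantity $\mathbb{E}[\log(\frac{1}{K}\sum_k e^{f_\theta(x)\cdot f_\theta(x_k^-)/\tau})]$ produced in the previous step. The empirical average $Y_K := \frac{1}{K}\sum_k e^{f_\theta(x)\cdot f_\theta(x_k^-)/\tau}$ is unbiased for $Y(x) := \mathbb{E}_{x^-\sim p_{\mathrm{data}}}[e^{f_\theta(x)\cdot f_\theta(x^-)/\tau}]$; combining the Lipschitz bound $|\log Y_K - \log Y(x)| \leq |Y_K - Y(x)|/\min(Y_K, Y(x))$ with a standard concentration inequality yields $|\mathbb{E}[\log Y_K] - \log Y(x)| = O(1/\sqrt{K})$. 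Finally, I would split $Y(x) = \sum_c \pi_c \mathbb{E}_{x^-\sim p_c}[e^{f_\theta(x)\cdot f_\theta(x^-)/\tau}]$ and apply Jensen's inequality inside each class---using convexity of $\exp$---to obtain $\mathbb{E}_{x^-\sim p_c}[e^{f_\theta(x)\cdot f_\theta(x^-)/\tau}] \geq e^{\mu_c \cdot f_\theta(x)/\tau}$. Hence $\log Y(x) \geq \log \sum_c \pi_c e^{\mu_c \cdot f_\theta(x)/\tau}$, which (after absorbing $\tau\log\pi_c$ into the bias of $W^\star$) coincides with the softmax log-partition in $\mathcal{L}^{\mathrm{sup}}_{\mathcal{D}_{\mathrm{sup}}}(\theta, W^\star)$. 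Chaining all four pieces gives the claimed inequality.

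\textbf{Main obstacle.} The delicate point is controlling $\mathbb{E}[\log Y_K]$ with the advertised $O(1/\sqrt{K})$ rate: because $Y_K$ can in principle be very small on a low-probability event, $\log Y_K$ can be unboundedly negative and spoil the bound. Getting around this requires a quantitative lower bound on $Y_K$, which in turn needs a boundedness assumption on $\|f_\theta\|$ and on the temperature $\tau$. A secondary subtlety will be reconciling the $\pi_c$-weighted partition arising from Jensen with the unweighted softmax in the definition of $\mathcal{L}^{\mathrm{sup}}$; this is the reason for building the $\tau\log\pi_c$ bias into $W^\star$ from the start.
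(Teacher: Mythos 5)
Your proposal follows essentially the same route as the paper's proof: the mean-classifier witness $\bar W_c=\mathbb{E}_{x\sim p_c}\left[f_\theta(x)\right]$ combined with $\mathcal{L}^{\mathrm{sup}}_{\mathcal{D}_{\mathrm{sup}}}(\theta)\leq\mathcal{L}^{\mathrm{sup}}_{\mathcal{D}_{\mathrm{sup}}}(\theta,\bar W)$, Jensen's inequality within each class to turn the population log-partition into the softmax partition, concentration (Cauchy--Schwarz on the empirical mean over the $K$ negatives, under the same implicit boundedness of $\Vert f_\theta\Vert$) for the $O(1/\sqrt{K})$ term, and peeling off $\log\beta$ plus an $O(1/\beta)$ residual coming from the extra positive-pair term in the InfoNCE denominator, exactly as in the paper's three-step argument. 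The only notable difference is your $\tau\log\pi_c$ bias, which treats the class-prior weighting inside the log-partition more carefully than the paper (which silently drops the $\pi_c$ factors at the end of its first step); otherwise the decomposition and key lemmas coincide.
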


Inequality (\ref{eq:Lsup_bound_Lun}) points out that to achieve
\[
\min_{\theta}\mathcal{L}_{\mathcal{D}_{\mathrm{sup}}}^{\mathrm{sup}}\left(\theta\right),
\]
we can alternatively minimize its upper-bound which is relevant to
$\mathcal{L}_{\mathcal{D}_{\mathrm{un}}}^{\mathrm{un}}\left(\theta\right)$. Unfortunately,
minimizing $\mathcal{L}_{\mathcal{D}_{\mathrm{un}}}^{\mathrm{un}}\left(\theta\right)$
directly is intractable due to the unknown general distribution $\mathcal{D}_{\mathrm{un}}$.
The following theorem resolves this issue and also signifies the concept
of sharpness for the feature extractor $f_{\theta}$.

\begin{thm}
\label{thm:sharpness}Under mild conditions, with the probability at least $1-\delta$ over
the random choice of $\mathcal{S}\sim\mathcal{D}_{\mathrm{un}}^{N}$, we have
the following inequality 

\begin{align*}
&\mathcal{L}_{\mathcal{D}_{\mathrm{un}}}^{\mathrm{un}}\left(\theta,p_{\mathrm{pos}}\right)  \leq\max_{\theta':\Vert\theta'-\theta\Vert<\rho}\mathcal{L}_{\mathcal{S}}^{\mathrm{un}}\left(\theta,p_{\mathrm{pos}}\right)+\\
&\quad \frac{1}{\sqrt{N}}\Biggl[  \frac{T}{2}\log\left(1+\frac{\norm{\theta}^{2}}{T\sigma^{2}}\right)+\log\frac{1}{\delta}+\frac{L^{2}}{8}+2L \\
&\quad + O\Big(\log(N+T)\Big)\Biggr],
\end{align*}
where $L=\frac{2}{\tau}+\log(1+\beta)$, $T$ is the number of parameters
in $\theta$, and $\sigma=\frac{\rho}{\sqrt{T}+\sqrt{\log\left(N\right)}}$.
\end{thm}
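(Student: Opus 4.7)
The plan is to derive Theorem~\ref{thm:sharpness} as a PAC-Bayesian sharpness-aware generalisation bound in the style of Foret et al.\ (SAM), applied to the InfoNCE objective. The derivation splits naturally into four pieces: bounding the per-example loss, McAllester-style PAC-Bayes with a Gaussian prior and posterior, turning the resulting expectation over perturbations into the sharpness max over an $\ell_2$-ball of radius $\rho$, and a union-bound refinement so that the effective prior variance can scale with $\|\theta\|$.

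First, I would establish that under the mild conditions (chiefly, $\ell_2$-normalised features so that every inner product lies in $[-1,1]$) the per-example InfoNCE loss is bounded by $L=2/\tau+\log(1+\beta)$: a one-line calculation shows that the ratio inside the negative log is at least $(1+\beta e^{2/\tau})^{-1}$, so $-\log(\cdot)\leq L$. Next, apply McAllester's PAC-Bayes bound with posterior $Q=\mathcal{N}(\theta,\sigma^{2}I_{T})$ and prior $P=\mathcal{N}(0,\sigma^{2}I_{T})$. Rescaling the loss by $L$ yields, with probability at least $1-\delta$,
\begin{align*}
\mathbb{E}_{\theta'\sim Q}\bigl[\mathcal{L}^{\mathrm{un}}_{\mathcal{D}_{\mathrm{un}}}(\theta',p_{\mathrm{pos}})\bigr] \leq \mathbb{E}_{\theta'\sim Q}\bigl[\mathcal{L}^{\mathrm{un}}_{\mathcal{S}}(\theta',p_{\mathrm{pos}})\bigr] + L\sqrt{\frac{\tfrac{\|\theta\|^{2}}{2\sigma^{2}}+\log\tfrac{1}{\delta}+c}{N}},
\end{align*}
using the closed form $\mathrm{KL}(Q\Vert P)=\|\theta\|^{2}/(2\sigma^{2})$.

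The third step converts the two posterior expectations to the quantities in the theorem. By Gaussian concentration, $\|\epsilon\|\leq \sigma(\sqrt{T}+\sqrt{\log N})$ with probability $\geq 1-1/N$ for $\epsilon\sim\mathcal{N}(0,\sigma^{2}I_T)$; with the stated choice $\sigma=\rho/(\sqrt{T}+\sqrt{\log N})$, essentially all posterior mass lies inside $\{\theta':\|\theta'-\theta\|\leq \rho\}$, and so $\mathbb{E}_{\theta'\sim Q}[\mathcal{L}^{\mathrm{un}}_{\mathcal{S}}(\theta',p_{\mathrm{pos}})]$ is dominated, up to an $O(L/\sqrt{N})$ error, by $\max_{\|\theta'-\theta\|\leq \rho}\mathcal{L}^{\mathrm{un}}_{\mathcal{S}}(\theta',p_{\mathrm{pos}})$; a symmetric argument bounds $\mathbb{E}_{\theta'\sim Q}[\mathcal{L}^{\mathrm{un}}_{\mathcal{D}_{\mathrm{un}}}(\theta',p_{\mathrm{pos}})]$ from below by $\mathcal{L}^{\mathrm{un}}_{\mathcal{D}_{\mathrm{un}}}(\theta,p_{\mathrm{pos}}) - O(L/\sqrt{N})$. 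Finally, since $\|\theta\|$ is data-dependent, one cannot choose $\sigma$ as a function of $\|\theta\|$ directly inside a single application of PAC-Bayes; the classical remedy is to apply the bound on a geometric grid of candidate $\sigma$-values and take a union bound. This upgrades the KL contribution $\|\theta\|^{2}/(2\sigma^{2})$ into the refined $\tfrac{T}{2}\log(1+\|\theta\|^{2}/(T\sigma^{2}))$ with an additive $O(\log(N+T))$ penalty.

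The main obstacle is the third step: one must simultaneously transport the posterior expectation to the max over the $\rho$-ball on the empirical side, and back to the point value at $\theta$ on the population side, keeping the total slack $O(1/\sqrt{N})$. Combining the range bound $L$ from Step~1 with the sub-Gaussian tail of $\|\epsilon\|$ is exactly what produces the precise $L^{2}/8+2L$ coefficients in the statement; Steps~1, 2, and 4 are then standard PAC-Bayes bookkeeping.
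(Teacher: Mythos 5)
Your skeleton coincides with the paper's proof of this theorem: bound the per-example InfoNCE loss by $L=\frac{2}{\tau}+\log(1+\beta)$ using normalised features, apply PAC-Bayes with posterior $Q=\mathcal{N}(\theta,\sigma^{2}\mathbb{I}_{T})$ and a zero-mean Gaussian prior, refine the prior variance over a geometric grid with a union bound (data-dependent choice $\sigma_{P}^{2}\approx\sigma^{2}+\|\theta\|^{2}/T$) to obtain $\frac{T}{2}\log\bigl(1+\frac{\|\theta\|^{2}}{T\sigma^{2}}\bigr)+O(\log(N+T))$, and use chi-square concentration with $\sigma=\rho/(\sqrt{T}+\sqrt{\log N})$ to replace $\mathbb{E}_{\theta'\sim Q}[\mathcal{L}_{\mathcal{S}}^{\mathrm{un}}(\theta')]$ by $\max_{\|\theta'-\theta\|\le\rho}\mathcal{L}_{\mathcal{S}}^{\mathrm{un}}(\theta')+2L/\sqrt{N}$. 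The only cosmetic difference is that the paper invokes the Alquier/Germain-type bound with free parameter $\sqrt{N}$ and Hoeffding's lemma (which is exactly where the $L^{2}/8$ comes from), whereas you use McAllester after rescaling the loss to $[0,1]$; that route also works, since an AM--GM step converts the square-root form into the additive $\frac{1}{\sqrt{N}}[\cdot]$ form, up to constants absorbed in the $O(\cdot)$ terms.

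The genuine gap is your population-side transport in Step 3. You claim a ``symmetric argument'' shows $\mathbb{E}_{\theta'\sim Q}[\mathcal{L}_{\mathcal{D}_{\mathrm{un}}}^{\mathrm{un}}(\theta')]\ge\mathcal{L}_{\mathcal{D}_{\mathrm{un}}}^{\mathrm{un}}(\theta)-O(L/\sqrt{N})$. Gaussian concentration does not give this: it only yields a lower bound of the form $(1-1/\sqrt{N})\min_{\|\theta'-\theta\|\le\rho}\mathcal{L}_{\mathcal{D}_{\mathrm{un}}}^{\mathrm{un}}(\theta')$, and the minimum over the $\rho$-ball can sit far below the value at the centre (take $\theta$ at a sharp local peak of the population loss), so the slack is not $O(L/\sqrt{N})$ in general, and the only ``mild condition'' you posited (feature normalisation) does not repair this. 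The paper does not prove this step at all; it \emph{assumes} it --- the mild condition stated in its appendix is precisely $\mathcal{L}_{\mathcal{D}_{\mathrm{un}}}^{\mathrm{un}}(\theta)\le\mathbb{E}_{\theta'\sim\mathcal{N}(\theta,\sigma^{2}\mathbb{I}_{T})}[\mathcal{L}_{\mathcal{D}_{\mathrm{un}}}^{\mathrm{un}}(\theta')]$, the same smoothing assumption as in Foret et al. So you must either adopt that assumption explicitly or keep the Gaussian-smoothed population loss on the left-hand side; as written, this step of your argument fails. A minor bookkeeping point you also omit: the data-dependent grid index is only well defined when $\|\theta\|^{2}\le T\sigma^{2}(\exp(4N/T)-1)$, and the complementary regime must be handled separately (there the right-hand side already exceeds $L$, so the bound is trivial).
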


We note that the proof in \cite{foret2021sharpnessaware} invoked the McAllester PAC-Bayesian generalization bound \cite{mcallester1999pac}, hence only applicable to the 0-1 loss in the binary classification setting. Ours is the first work that proposes and devises sharpness-aware theory for feature extractor in the context of SSL. Additionally, the proof of our theory employs the PAC-Bayesian generalization bound \cite{JMLR:v17:15-290} to deal with the more general InfoNCE loss.

By leveraging Theorems \ref{thm:sup_un} and \ref{thm:sharpness},
we reach the following theorem.

\begin{thm}
\label{thm:main}Under mild conditions, with the probability at least $1-\delta$ over the
random choice of $\mathcal{S}\sim\mathcal{D}_\mathrm{un}^{N}$, we have the
following inequality 
\begin{align*}
&\mathcal{L}_{\mathcal{D}_{\mathrm{sup}}}^{\mathrm{sup}}\left(\theta\right)  \leq\max_{\theta':\Vert\theta'-\theta\Vert<\rho}\mathcal{L}_{\mathcal{S}}^\mathrm{un}\left(\theta',p_{\mathrm{pos}}\right)
  -O\left(\frac{1}{\sqrt{K}}\right)-\\ &\quad \log\beta-
  O\Big(\frac{1 }{\beta}\Big)
+\frac{1}{\sqrt{N}} \Bigg[\frac{T}{2}\log\left(1+\frac{\norm{\theta}^{2}}{T\sigma^{2}}\right)+\\
&\quad \log\frac{1}{\delta}+\frac{L^{2}}{8}+2L + O\Big(\log(N+T) \Big)\Bigg]
\end{align*}
where $L=\frac{2}{\tau} + \log(1+\beta)$, $T$ is the number of parameters
in $\theta$, and $\sigma=\frac{\rho}{\sqrt{T}+\sqrt{\log\left(N\right)}}$.
\end{thm}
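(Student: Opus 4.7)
The proof plan is to compose Theorem \ref{thm:sup_un} and Theorem \ref{thm:sharpness}; neither result needs to be re-derived, and the entire argument is a two-step chaining of inequalities followed by a check that the probabilistic hypothesis transfers cleanly.

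First I would invoke Theorem \ref{thm:sup_un} to obtain the deterministic bound
\begin{align*}
\mathcal{L}_{\mathcal{D}_{\mathrm{sup}}}^{\mathrm{sup}}(\theta) &\leq \mathcal{L}_{\mathcal{D}_{\mathrm{un}}}^{\mathrm{un}}(\theta, p_{\mathrm{pos}}) - O\!\left(\tfrac{1}{\sqrt{K}}\right) \\
&\quad - \log\beta - O\!\left(\tfrac{1}{\beta}\right),
\end{align*}
which holds for every $\theta\in\Theta$ with no randomness involved. Next I would apply Theorem \ref{thm:sharpness}, which guarantees that with probability at least $1-\delta$ over $\mathcal{S}\sim\mathcal{D}_{\mathrm{un}}^{N}$,
\begin{align*}
\mathcal{L}_{\mathcal{D}_{\mathrm{un}}}^{\mathrm{un}}(\theta, p_{\mathrm{pos}}) &\leq \max_{\theta':\,\Vert\theta'-\theta\Vert<\rho}\mathcal{L}_{\mathcal{S}}^{\mathrm{un}}(\theta', p_{\mathrm{pos}}) \\
&\quad + \tfrac{1}{\sqrt{N}}\Bigl[\tfrac{T}{2}\log\bigl(1+\tfrac{\Vert\theta\Vert^{2}}{T\sigma^{2}}\bigr) + \log\tfrac{1}{\delta} \\
&\quad + \tfrac{L^{2}}{8} + 2L + O(\log(N+T))\Bigr].
\end{align*}
Substituting the second bound into the right-hand side of the first and gathering the resulting constants produces exactly the expression claimed in Theorem \ref{thm:main}.

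The only subtlety worth flagging, and the one I would verify carefully, is bookkeeping on the probability event. Theorem \ref{thm:sup_un} is a deterministic statement, while Theorem \ref{thm:sharpness} holds on an event of probability at least $1-\delta$; the composed bound therefore inherits the same confidence level, with no union bound required. I would also reconcile the notation between Theorem \ref{thm:sharpness} (where, as written, the maximand appears as $\mathcal{L}_{\mathcal{S}}^{\mathrm{un}}(\theta, p_{\mathrm{pos}})$) and Theorem \ref{thm:main} (where it appears as $\mathcal{L}_{\mathcal{S}}^{\mathrm{un}}(\theta', p_{\mathrm{pos}})$), confirming that the intended sharpness-aware quantity evaluates the empirical InfoNCE loss at the perturbed parameter $\theta'$, which is precisely what the PAC-Bayesian argument underlying Theorem \ref{thm:sharpness} delivers. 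There is no genuine analytical obstacle here; the only real work is aligning the constants so the final display matches the theorem statement verbatim.
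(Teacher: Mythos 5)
Your proposal is correct and follows essentially the same route as the paper, whose own proof of this theorem consists of a single line stating that it is a direct consequence of Theorems~\ref{thm:sup_un} and~\ref{thm:sharpness}. Your additional remarks, that the $1-\delta$ confidence event transfers unchanged because Theorem~\ref{thm:sup_un} is deterministic, and that the maximand should read $\mathcal{L}_{\mathcal{S}}^{\mathrm{un}}(\theta',\cdot)$ rather than $\mathcal{L}_{\mathcal{S}}^{\mathrm{un}}(\theta,\cdot)$, are both accurate and tidy up genuine small inconsistencies in the paper's stated versions of the prior theorems.
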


Theorem \ref{thm:main} benefits us in two folds. First, it explains why in training SSL approaches, the extremely large batch size $K$ is necessary to reduce the gap between the general supervised loss and the unsupervised InfoNCE general loss for boosting the classification performance in the second phase. Second, it sheds lights for us to develop our sharpness-aware SSL for implicitly lowering the general loss and hence improving the classification performance in the second phase.

\subsection{Theory Development for the Practical Distribution} \label{sec:practical_case}
In Section \ref{sec:ideal_case}, we develop the theories for the ideal case when using the ideal distribution $p_\mathrm{pos}(x,x^+)$ to generate positive pairs. However, in practice, we employ a practical distribution $\tilde{p}_\mathrm{pos}(x,x^+)$ to approximate the ideal one. It is appealing to extend our theory development for this practical setting, further leading us to the awareness of the shift between two relevant distributions and the proposal of a new Fourier-based random data augmentation technique to reduce this gap.  

We first describe practical $\tilde{p}_{\mathrm{pos}}\left(x,x^{+}\right)$
based on random augmentations. Given a distribution over data augmentations
$\mathcal{T}$, we sample a specific data augmentation $t\sim\mathcal{T}$
to compute $x^{+}=t\left(x\right)$ and form the positive pair $\left(x,x^{+}\right)$.
The set of such random pairs form the practical distribution $\tilde{p}_{\mathrm{pos}}\left(x,x^{+}\right)$ over
positive pairs. We develop
the bound between $\mathcal{L}_{\mathcal{D}_\mathrm{sup}}^\mathrm{sup}\left(\theta\right)$
and $\mathcal{L}_{\mathcal{D}_\mathrm{un}}^\mathrm{un}\left(\theta,\tilde{p}_{\mathrm{pos}}\right)$
in the following theorem.

\begin{thm}
\label{thm:practical_sup_un}The following inequality holds

\begin{align} \label{shift_1}
 & \mathcal{L}_{\mathcal{D}_\mathrm{sup}}^\mathrm{sup}\left(\theta\right)\leq\mathcal{L}_{\mathcal{D}_\mathrm{un}}^\mathrm{un}\left(\theta,\tilde{p}_{\mathrm{pos}}\right)-O\left(\frac{1}{\sqrt{K}}\right) \nonumber\\
 & \qquad +\mathcal{L}_\mathrm{shift}\left(\tilde{p}_{\mathrm{pos}},p_{\mathrm{pos}}\right)-\log\beta-O\Big(\frac{1 }{\beta}\Big), 
\end{align} 
where $\mathcal{L}_\mathrm{shift}\left(\tilde{p}_{\mathrm{pos}},p_{\mathrm{pos}}\right)$
is defined as
{\smaller{}
\[
\tau\sum_{c=1}^{M}\pi_{c}\mathbb{E}_{x\sim p_{c}}\left[\Vert\mathbb{E}_{x^{+}\sim p_{c}}\left[f_{\theta}\left(x^{+}\right)\right]-\mathbb{E}_{t\sim\mathcal{T},x^{+}=t\left(x\right)}\left[f_{\theta}\left(x^{+}\right)\right]\Vert^{\frac{1}{2}}\right]. 
\]
}
\end{thm}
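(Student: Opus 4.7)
The plan is to bootstrap off Theorem~\ref{thm:sup_un}, which already bounds $\mathcal{L}_{\mathcal{D}_\mathrm{sup}}^\mathrm{sup}(\theta)$ by the ideal-distribution InfoNCE loss $\mathcal{L}_{\mathcal{D}_\mathrm{un}}^\mathrm{un}(\theta,p_\mathrm{pos})$ up to the $O(1/\sqrt{K})$, $-\log\beta$, and $O(1/\beta)$ terms. What remains is to exchange the ideal positive-pair distribution $p_\mathrm{pos}$ for the practical one $\tilde p_\mathrm{pos}$, at the price of the shift term; concretely, I aim to prove
\[
\mathcal{L}_{\mathcal{D}_\mathrm{un}}^\mathrm{un}(\theta, p_\mathrm{pos}) - \mathcal{L}_{\mathcal{D}_\mathrm{un}}^\mathrm{un}(\theta, \tilde p_\mathrm{pos}) \le \mathcal{L}_\mathrm{shift}(\tilde p_\mathrm{pos}, p_\mathrm{pos}),
\]
and then chain this with Theorem~\ref{thm:sup_un} to obtain \eqref{shift_1}.

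First I would isolate the positive-pair dependence. Since $x_{1:K}^{-}\sim p_\mathrm{data}$ in both losses, I integrate the negatives out first and view the remaining integrand as $\ell(x,v)$, with $v := f_\theta(x^+)$, $\phi := f_\theta(x)\cdot v/\tau$, and $S := (\beta/K)\sum_k e^{f_\theta(x)\cdot f_\theta(x_k^-)/\tau}$, so that $\ell(x,v)=\mathbb{E}_{x_{1:K}^-}\bigl[-\phi+\log(e^\phi + S)\bigr]$. Each inner log-sum-exp is convex in $\phi$, hence in $v$, so $\ell(x,\cdot)$ is convex, bounded, and has gradient of norm at most $\|f_\theta(x)\|/\tau$. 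Using $p_\mathrm{pos}(x,x^+) = \sum_c \pi_c p_c(x) p_c(x^+)$ together with the class-mixture form of $\tilde p_\mathrm{pos}$ obtained by marginalizing $x\sim p_\mathrm{data}=\sum_c \pi_c p_c$ and then drawing $x^+=t(x)$ with $t\sim\mathcal{T}$, the difference of losses becomes
\[
\sum_{c=1}^{M}\pi_c\,\mathbb{E}_{x\sim p_c}\!\Bigl[\mathbb{E}_{x^{+}\sim p_c}[\ell(x,f_\theta(x^{+}))] - \mathbb{E}_{t}[\ell(x,f_\theta(t(x)))]\Bigr].
\]

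The key step is to bound each inner bracket in terms of $\Delta_c(x):=\mathbb{E}_{x^{+}\sim p_c}[f_\theta(x^{+})] - \mathbb{E}_{t}[f_\theta(t(x))]$, the vector whose norm appears inside $\mathcal{L}_\mathrm{shift}$. Convexity of $\ell(x,\cdot)$ plus a mean-value expansion about the segment joining the two conditional means of $f_\theta(x^{+})$ reduces each bracket to a gradient-inner-product term, after which Cauchy--Schwarz and the gradient-norm bound yield an upper bound proportional to $\|f_\theta(x)\|\cdot\|\Delta_c(x)\|/\tau$. Invoking the standard SimCLR normalization $\|f_\theta(\cdot)\|\le 1$, so that $\|\Delta_c(x)\|$ lives in a bounded interval, one can then trade the linear factor for a square-root factor via the elementary fact $r \le C\sqrt{r}$ on any bounded range of $r$, and tracking how $\tau$ propagates through the log-sum-exp absorbs the remaining temperature factors into the leading $\tau$ in front of the norm, matching the integrand of $\mathcal{L}_\mathrm{shift}$ exactly. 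Summing against $\pi_c$ and taking expectation in $x$ closes the loop.

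The main obstacle is landing on the precise form $\tau\,\|\Delta_c(x)\|^{1/2}$ rather than the cruder $\|\Delta_c(x)\|/\tau$ that a naive Lipschitz / Cauchy--Schwarz chain delivers: the conversion of a linear to a square-root dependence, and especially the inversion of the temperature factor, is delicate and requires either carefully retaining the non-negative Jensen gaps that a pure convexity argument discards, or invoking explicit range bounds on $\|\Delta_c\|$ from feature normalization while carefully accounting for how $\tau$ propagates through the log-sum-exp. Once this temperature-scaling step is handled cleanly, the remainder is routine assembly between Theorem~\ref{thm:sup_un} and the class-conditional decomposition of positive pairs.
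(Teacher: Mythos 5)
Your chaining strategy (Theorem~\ref{thm:sup_un} plus a bound on the effect of swapping $p_{\mathrm{pos}}$ for $\tilde p_{\mathrm{pos}}$) is the right skeleton, but your key step is not sound as proposed. You want to show $\mathcal{L}_{\mathcal{D}_\mathrm{un}}^\mathrm{un}(\theta,p_{\mathrm{pos}})-\mathcal{L}_{\mathcal{D}_\mathrm{un}}^\mathrm{un}(\theta,\tilde p_{\mathrm{pos}})\le \mathcal{L}_\mathrm{shift}$ by comparing, for each class $c$ and each anchor $x$, the two conditional expectations of the \emph{full} InfoNCE integrand $\ell(x,v)$ with $v=f_\theta(x^+)$, and you propose to control this by the difference of conditional means $\Delta_c(x)=\mathbb{E}_{x^+\sim p_c}[f_\theta(x^+)]-\mathbb{E}_{t}[f_\theta(t(x))]$ via convexity, a mean-value expansion, and Cauchy--Schwarz. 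This cannot work: $\ell(x,\cdot)$ is nonlinear in $v$, and a difference of expectations of a nonlinear function under two distributions is not controlled by the distance between their means (two conditional laws of $f_\theta(x^+)$ can have identical means yet different expected losses, so any bound depending only on $\Vert\Delta_c(x)\Vert$ is false in general). Convexity only gives one-sided Jensen inequalities, and the Jensen gap on the $p_c$ side --- which you acknowledge but do not eliminate --- is exactly the obstruction; Lipschitzness would let you bound the gap by a Wasserstein distance between the two conditionals, not by $\Vert\Delta_c(x)\Vert$. The paper avoids this entirely by never comparing the full InfoNCE losses across positive distributions: it routes through the surrogate loss $\overline{\mathcal{L}}_{\mathcal{D}_\mathrm{un}}^\mathrm{un}$ already introduced in the proof of Theorem~\ref{thm:sup_un}, in which the positive-pair distribution enters only through the \emph{linear} alignment term $\mathbb{E}_{(x,x^+)}\bigl[-\tfrac{1}{\tau}f_\theta(x)\cdot f_\theta(x^+)\bigr]$. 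There the swap is exact: the difference of surrogates equals $\tfrac{1}{\tau}\bigl|\sum_c\pi_c\,\mathbb{E}_{x\sim p_c}[f_\theta(x)]\cdot\Delta_c(x)\bigr|$, which Cauchy--Schwarz and feature normalization turn into the shift term; the $-O(1/\sqrt K)$, $-\log\beta$, $-O(1/\beta)$ terms then come from relating the surrogate at $\tilde p_{\mathrm{pos}}$ back to $\mathcal{L}_{\mathcal{D}_\mathrm{un}}^\mathrm{un}(\theta,\tilde p_{\mathrm{pos}})$ exactly as in the second step of Theorem~\ref{thm:sup_un}'s proof. The missing idea in your proposal is precisely this reduction to a quantity that is linear in the conditional law of $x^+$.

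On the form of the shift term: your unease about producing $\tau\Vert\Delta_c(x)\Vert^{1/2}$ is well founded, since the natural output of the linear-term argument is $\tau^{-1}\Vert\Delta_c(x)\Vert$ (this is in fact what the paper's appendix derives, and the main-text statement's $\tau\Vert\cdot\Vert^{1/2}$ is inconsistent with it). Your proposed fix only half works: on a bounded range one can indeed trade $r$ for $C\sqrt r$, but no amount of ``tracking how $\tau$ propagates through the log-sum-exp'' converts a $1/\tau$ prefactor into a $\tau$ prefactor uniformly in $\tau$, so that part of the plan should be dropped rather than patched.
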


In the upper-bound (\ref{shift_1}), there appears the data shift term $\mathcal{L}_\mathrm{shift}\left(\tilde{p}_{\mathrm{pos}},p_{\mathrm{pos}}\right)$ between the ideal and practical distributions to generate positive pairs. Evidently, this term would be minimized when given a data example $x$ in the class $c$, we can strengthen the random data augmentation $t \sim \mathcal{T}$ so that the positive examples $x^+ = t(x)$  are more diverse to capture better the class-condition distribution $p_c$. However, it is a challenging task because we do not possess any label information of any class $c$ to characterize the space of this class. Here we note that the definition of $\mathcal{L}_{\mathcal{D}_\mathrm{un}}^\mathrm{un}\left(\theta,\tilde{p}_{\mathrm{pos}}\right)$ is the same as that of $\mathcal{L}_{\mathcal{D}_\mathrm{un}}^\mathrm{un}\left(\theta,p_{\mathrm{pos}}\right)$ except that positive pairs $(x,x^+)$ are sampled from the practical distribution $\tilde{p}_{\mathrm{pos}}$. 

We finally develop the following theorem to glue $\mathcal{L}_{\mathcal{D}_\mathrm{sup}}^\mathrm{sup}\left(\theta\right)$
and the sharpness-aware unsupervised InfoNCE loss over the practical
distribution $\tilde{p}_{\mathrm{pos}}\left(x,x^{+}\right)$ in the following theorem.

\begin{thm}
\label{thm:practical_sharpness}Under mild conditions, with the probability at least $1-\delta$
over the random choices $\mathcal{S}\sim\mathcal{D}_\mathrm{un}^{N}$, we
have the following inequality 
{\small{}
\begin{align*}
 & \mathcal{L}_{\mathcal{D}_\mathrm{sup}}^\mathrm{sup}\left(\theta\right)\leq\max_{\theta':\norm{\theta'-\theta}<\rho}\mathcal{L}_{\mathcal{S}}^\mathrm{un}\left(\theta',\tilde{p}_{\mathrm{pos}}\right)-O\left(\frac{1}{\sqrt{K}}\right)\\
 & +\frac{1}{\sqrt{N}}\Bigg[\frac{T}{2}\log\left(1+\frac{\norm{\theta}^{2}}{T\sigma^{2}}\right)+\log\frac{1}{\delta}+\frac{L^{2}}{8}+2L + \\
 &O\Big(\log(N+T) \Big)\Bigg]
+\mathcal{L}_\mathrm{shift}\left(\tilde{p}_{\mathrm{pos}},p_{\mathrm{pos}}\right)-\log\beta-O\Big(\frac{1 }{\beta}\Big)
\end{align*}
}
where $L=\frac{2}{\tau}+\log(1+\beta)$, $T$ is the number of parameters
in $\theta$, and $\sigma=\frac{\rho}{\sqrt{T}+\sqrt{\log\left(N\right)}}$.
\end{thm}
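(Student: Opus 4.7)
The plan is to obtain Theorem \ref{thm:practical_sharpness} by composing the two bounds that are already in hand, namely Theorem \ref{thm:practical_sup_un} (which relates the supervised general loss to the unsupervised general loss under the practical positive-pair distribution) and a version of Theorem \ref{thm:sharpness} applied to $\tilde{p}_{\mathrm{pos}}$ in place of $p_{\mathrm{pos}}$. The target inequality is exactly the sum of the right-hand sides of these two statements, so the high-level structure of the proof is a clean chaining argument; there should be no new analytic ingredient beyond what Theorems \ref{thm:practical_sup_un} and \ref{thm:sharpness} already provide.

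Concretely, I would proceed in three steps. First, I would apply Theorem \ref{thm:practical_sup_un} to upper bound
\[
\mathcal{L}_{\mathcal{D}_\mathrm{sup}}^\mathrm{sup}(\theta) \;\le\; \mathcal{L}_{\mathcal{D}_\mathrm{un}}^\mathrm{un}(\theta,\tilde{p}_{\mathrm{pos}}) - O\!\left(\tfrac{1}{\sqrt{K}}\right) + \mathcal{L}_\mathrm{shift}(\tilde{p}_{\mathrm{pos}},p_{\mathrm{pos}}) - \log\beta - O\!\left(\tfrac{1}{\beta}\right).
\]
Second, I would invoke Theorem \ref{thm:sharpness} with the positive-pair distribution taken to be $\tilde{p}_{\mathrm{pos}}$: the PAC-Bayesian argument used there (based on \cite{JMLR:v17:15-290}) only requires that $\mathcal{S}\sim\mathcal{D}_\mathrm{un}^N$ consists of i.i.d.\ tuples $(x,x^+,x_{1:K}^-)$ with $(x,x^+)$ drawn from whatever positive-pair distribution appears in the loss, together with the Lipschitz/boundedness constants $L=\tfrac{2}{\tau}+\log(1+\beta)$ that govern the InfoNCE logit. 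Both conditions are oblivious to whether the pair distribution is $p_{\mathrm{pos}}$ or $\tilde{p}_{\mathrm{pos}}$, so the same bound goes through verbatim and yields, with probability at least $1-\delta$,
\[
\mathcal{L}_{\mathcal{D}_\mathrm{un}}^\mathrm{un}(\theta,\tilde{p}_{\mathrm{pos}}) \;\le\; \max_{\theta':\,\norm{\theta'-\theta}<\rho}\mathcal{L}_{\mathcal{S}}^\mathrm{un}(\theta',\tilde{p}_{\mathrm{pos}}) + \tfrac{1}{\sqrt{N}}\bigl[\tfrac{T}{2}\log(1+\tfrac{\norm{\theta}^2}{T\sigma^2}) + \log\tfrac{1}{\delta} + \tfrac{L^2}{8} + 2L + O(\log(N+T))\bigr].
\]
Third, I would substitute this bound into the inequality from the first step; the additivity of the two bounds produces exactly the claimed right-hand side.

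The only real obstacle is checking that the sharpness-aware PAC-Bayesian bound of Theorem \ref{thm:sharpness} is genuinely independent of the choice of positive-pair distribution. I would do this by re-examining the proof sketch of Theorem \ref{thm:sharpness}: the generalization constants $L$ and the prior/posterior Gaussian construction over $\theta$ depend only on the per-sample loss magnitude and on the norm of $\theta$, both of which are invariant under replacing $p_{\mathrm{pos}}$ by $\tilde{p}_{\mathrm{pos}}$ as the marginal of $(x,x^+)$ inside $\mathcal{D}_\mathrm{un}$. Once this is observed, the union of failure events is still a single $\delta$-event and the two deterministic bounds (Theorem \ref{thm:practical_sup_un} is deterministic in $\theta$) combine without any further probabilistic cost, so no union bound over multiple random events is needed. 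This completes the proof.
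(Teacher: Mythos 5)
Your proposal matches the paper's own proof: the paper likewise obtains Theorem \ref{thm:practical_sharpness} by re-running the PAC-Bayesian sharpness bound of Theorem \ref{thm:sharpness} verbatim with $\tilde{p}_{\mathrm{pos}}$ in place of $p_{\mathrm{pos}}$ (the bound $L=\tfrac{2}{\tau}+\log(1+\beta)$ and the Gaussian prior/posterior construction do not depend on the positive-pair distribution) and then chaining it with the deterministic bound of Theorem \ref{thm:practical_sup_un}. Your observation that only the single $\delta$-event from the PAC-Bayes step is needed is also consistent with the paper's argument.
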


\subsection{Exploiting Theories for a Practical Method}
We now harvest our developed theories to reach a practical method improving SSL approaches based on the InfoNCE loss.  Based on Theorem \ref{thm:practical_sharpness}, we use one gradient ascent step to find $\theta^a$  and update the current model $\theta$ as
\begin{align*}
\theta^{a} & =\theta+\rho\frac{\nabla_{\theta}\mathcal{L}_{B}^\mathrm{un}\left(\theta,\tilde{p}_{\mathrm{pos}}\right)}{\Vert\nabla_{\theta}\mathcal{L}_{B}^\mathrm{un}\left(\theta,\tilde{p}_{\mathrm{pos}}\right)\Vert},\\
\theta & =\theta-\eta\nabla_{\theta}\mathcal{L}_{B}^{un}\left(\theta^{a},\tilde{p}_{\mathrm{pos}}\right),
\end{align*}
where $B = \{x_1,\dots, x_b\}$ is the current batch, $\rho >0$ is the perturbation radius, and $\eta >0$ is the learning rate.

Our proposed random data augmentation technique relies on the Fourier transformation. Given a single-channel image $x$, we compute its Fourier transform $\mathcal{F}(x)$ as  
{\small{}
\[
\mathcal{F}\left(x\right)\left(u,v\right)=\sum_{h=0}^{H-1}\sum_{w=0}^{W-1}x\left(h,w\right)\exp\left( -2\pi i\left(\frac{h}{H}u+\frac{w}{W}v\right)\right), 
\]}
where $i$ is the imaginary unit.

We denote $\mathcal{F}^{-1}(x)$ as the inverse Fourier transform. Note that both the Fourier transformation and its inverse
can be calculated with the FFT algorithm \cite{walker2017fast} efficiently. The amplitude and phase components are represented as 
\begin{align*}
\mathcal{A}\left(x\right)\left(u,v\right) & =\left[R^{2}\left(x\right)\left(u,v\right)+I^{2}\left(x\right)\left(u,v\right)\right]^{\frac{1}{2}}\\
\mathcal{P}\left(x\right)\left(u,v\right) & =\mathrm{arctan}\left[\frac{I\left(x\right)\left(u,v\right)}{R\left(x\right)\left(u,v\right)}\right],
\end{align*}
where $R(x)$ and $I(x)$ are the real and imaginary parts of $\mathcal{F}(x)$ respectively. 

It is well-known that the phase contains the foreground information, while the amplitude contains the background information \cite{Xu_2021_CVPR}. We now present how to apply our proposed Fourier-based data augmentation technique to SSL. For each data example $x_i \in B$, we sample two random augmentations $t,t' \sim \mathcal{T}$ similar to SimCLR \cite{chen2020simple} to form its positive examples $\tilde{x}_{2i-1} = t(x_i)$ and $\tilde{x}_{2i} = t'(x_i)$. Moreover, for each $\tilde{x}_k,  k=1,\dots,2b$, we find the most-similar example in the batch of positive examples as
\[
\tilde{x}_{k}^{c}=\mathrm{argmax}{}_{l\neq k}f_{\theta}\left(\tilde{x}_{k}\right).f_{\theta}\left(\tilde{x}_{l}\right).
\]
By doing so, we hope that for a quite-well trained feature extractor $f_\theta$, $\tilde{x}_{k}^{c}$ has more likely the same label as $\tilde{x}_{k}$. Inspired by \cite{Xu_2021_CVPR}, we next apply the Fourier transform to $\tilde{x}_{k}$, $\tilde{x}_{k}^{c}$ and then apply linear interpolation to their amplitudes, while keeping the phases intact.  
\[
\hat{\mathcal{A}}\left(\tilde{x}_{k}\right)=(1-\beta)\mathcal{A}\left(\tilde{x}_{k}\right)+\beta\mathcal{A}\left(\tilde{x}_{k}^{c}\right),
\]
where the coefficient $\beta \sim \mathrm{Uniform}(0, \alpha)$. 

Finally, we replace the positive example $\tilde{x}_{k}$ by an another positive example $\hat{x}_{k}$ computed as $\hat{x}_{k}=\mathcal{F}^{-1}\left(\mathcal{F}\left(\hat{x}_{k}\right)\right)$ where we have defined
\[
\mathcal{F}\left(\hat{x}_{k}\right)\left(u,v\right)=\hat{\mathcal{A}}\left(\tilde{x}_{k}\right)\exp\left\{i\mathcal{P}\left(\tilde{x}_{k}\right)\left(u,v\right)\right\}. 
\]

Because we keep intact the phase of $\tilde{x}_k$, the foreground information of $\hat{x}_k$ is similar to that of $\tilde{x}_k$, while its background information is interfered by $\tilde{x}_{k}^{c}$, expecting to have the same label as $\tilde{x}_k$. As a result, $\hat{x}_k$ is more diverge than $\tilde{x}_k$ in capturing and characterizing other similar-labeled examples in the mini-batch. Finally, we make use of the Fourier-based positive examples $\hat{x}_1,\dots,\hat{x}_{2b}$ in the InfoNCE loss. 

The resulting augmented images are illustrated in \autoref{fig:fft}. In the first two rows, we mix up an image with an another image in the same class. Resultantly, for the mixed images, the foreground information from the original images is kept nearly intact while the background information is interfered with that of the second image in the same class. With this construction, the mixed images can characterize better the space of their classes. In the third row, we accidentally mix up an original image with the one in a different class. However, the mixed image still maintains the crucial information of the original one.

\begin{figure}[t]
    \centering
    \includegraphics[width=\linewidth]{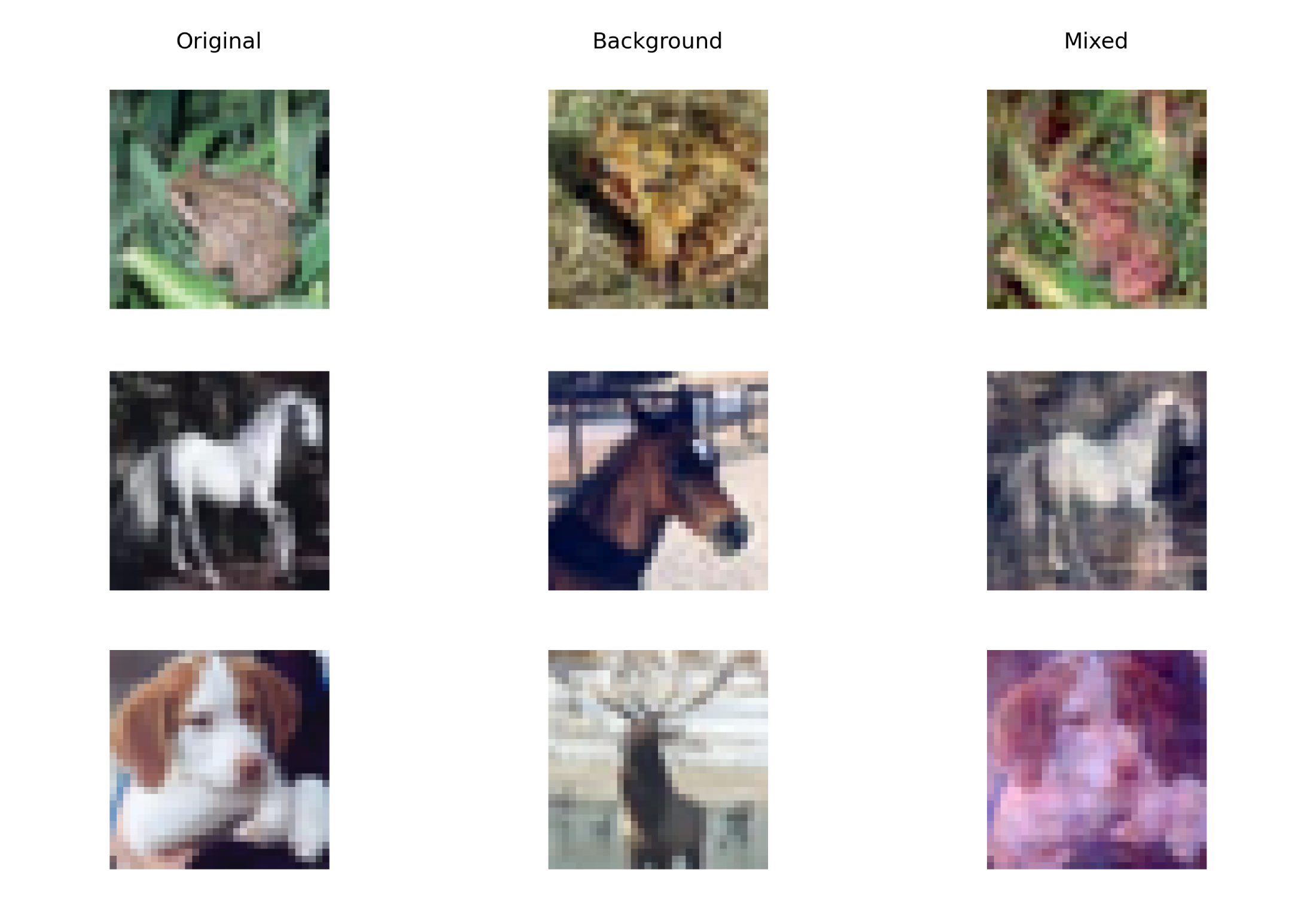}
    \caption{Visualization of the Fourier-based augmentations on CIFAR-10 with varying $\alpha$. In the first row, the foreground of the original image of a frog is placed in a different background relevant to an another frog to capture the space of the frogs better.}
    \label{fig:fft}
    \vspace{-4mm}
\end{figure}

\section{Experiments}
\subsection{Experimental Setup}

\begin{table*}[!ht]
    \centering
    \caption{Test set accuracy from linear evaluations of self-supervised learning methods (higher is better).}
    \begin{tabularx}{\textwidth}{c*{6}Y}
        \toprule
        \multirow{2}{*}{Method} & \multicolumn{2}{c}{CIFAR-10} & \multicolumn{2}{c}{CIFAR-100} & \multicolumn{2}{c}{Tiny-ImageNet} \\
        \cmidrule(lr){2-3}\cmidrule(lr){4-5}\cmidrule(lr){6-7}
        & Top-1 & Top5 & Top-1 & Top-5 & Top-1 & Top-5 \\
        \midrule
        SimCLR & 93.04\% & 99.82\% & 67.90\% & 91.24\% & 42.39\% & 69.99\% \\
        Debiased & 90.67\% & 99.71\% & 64.89\% & 89.31\% & 45.01\% & 71.03\% \\
        Hard Negative & 89.09\% & 99.50\% & 61.43\% & 86.17\% & 44.84\% & 70.73\% \\
        SSA-CLR & \textbf{94.08\%} & \textbf{99.90\%} & \textbf{71.90\%} & \textbf{92.93\%} & \textbf{46.87\%} & \textbf{72.72\%} \\
        \bottomrule
    \end{tabularx}
    \label{tab:linear}
    \vspace{-2mm}
\end{table*}

For empirical evaluations, we conduct experiments on various self-supervised learning methods on different datasets for a comprehensive look at comparative results. We opt for using ResNet-50 as the architecture of choice for our feature extractor, and a 2-layer projection head similar to other works in the field \cite{chen2020simple,chuang2020debiased,robinson2020hard}. For comparison, we evaluate our results with similar works aiming to improve SimCLR's baseline through debiasing training data \cite{chuang2020debiased}, and selectively training the model on hard negative data \cite{robinson2020hard}, using their official open-sourced codebase \footnote{\url{https://github.com/chingyaoc/DCL}} \footnote{\url{https://github.com/joshr17/HCL}} and hyperparameter sets as mentioned in the original papers. Our experiments are also seeded appropriately for reproducibility. 

We note that we do not seek state-of-the-art performances in our experiments. Alternatively, we want to demonstrate the usefulness of sharpness \& shift-aware components by comparing our SSA-CLR with SimCLR and other relevant baselines aiming to improve the InfoNCE loss of SimCLR. 

\subsection{Linear Evaluation Performance}

We conduct our experiment on three datasets of increasing difficulty: CIFAR-10 being a standard evaluation dataset, CIFAR-100 for a harder small-size problem, and Tiny-ImageNet for a computationally-feasible real world problem. The complete results are listed in \autoref{tab:linear}.

\vspace{0mm}
\textbf{CIFAR-10 \& CIFAR-100.}
For these datasets, experiments are run with batch size 256 for 1000 epochs. Evaluation results show that our method outperforms the standard SimCLR baselines by 1.04\% and 4\% in top-1 accuracy for CIFAR-10 and CIFAR-100, respectively. Regarding top-5 accuracy, SSA-CLR yields a difference of +0.08\% on CIFAR-10 which may be credited to randomness, and a notable +1.69\% on the harder CIFAR-100, where the higher difficulty highlights the difference between the two methods' performances. Surprisingly, \cite{chuang2020debiased} and \cite{robinson2020hard} both yield worse results than the which of the baseline, suggesting that these methods are heavily reliant on hyperparameter tuning and initialization randomness. 
\vspace{0mm}

\textbf{Tiny-ImageNet.}
For this dataset, due to our lack of access to powerful hardware, we will only run these methods for 500 epochs with batch size 64.
In this more practical case, both \cite{chuang2020debiased} and \cite{robinson2020hard} yield a noticeable improvement over the baseline, netting over 2\% in accuracy. Consistently and strangely, \cite{robinson2020hard} still gives a lower number than which of \cite{chuang2020debiased}, while the former is supposed to be a direct improvement over the latter. Overall, SSA-CLR still outperforms all other method.

\subsection{Feature Robustness}
\label{subsec:robust}
We also evaluate all methods' robustness by measuring their resulting models' robust accuracy using the Fast Gradient Sign Method (FGSM) attack \cite{43405} with perturbation budget $\epsilon=8/255$. In all experiments, our method achieve the highest robust accuracy amongst other self-supervised learning methods for free (i.e. without any adversarial training). The full results are listed in \autoref{tab:fgsm}.

\begin{table}[t]
    \centering
    \caption{Test set robust accuracy from linear evaluations of SSL methods with FGSM attack (higher is better).}
    \vspace{-2mm}
    \begin{tabularx}{\linewidth}{*{4}c}
        \toprule
        Dataset & {CIFAR-10} & {CIFAR-100} & Tiny-ImageNet \\
        \midrule
        SimCLR & 64.15\% & 28.94\% & 10.00\% \\
        Debiased & 51.37\% & 16.56\% & 9.94\% \\
        Hard Neg. & 55.04\% & 19.68\% & 11.89\% \\
        SSA-CLR & \textbf{69.47\%} & \textbf{33.94\%} & \textbf{12.83\%} \\
        \bottomrule
    \end{tabularx}
    \label{tab:fgsm}
    \vspace{-0mm}
\end{table}

We analyze the previous phenomenon by plotting t-SNE visualization of the extracted features in \autoref{fig:tsne}. The features from SSA-CLR are much more well-formed comparing to which from SimCLR, with class clusters' being easier to separate, and hard-to-classify points being much closer to their true classes' clusters. This leads to our downstream decision boundary being more robust to both generalization errors and adversarial attacks.

\begin{figure}[t]
    \centering
    \begin{subfigure}{0.49\linewidth}
        \includegraphics[width=\linewidth]{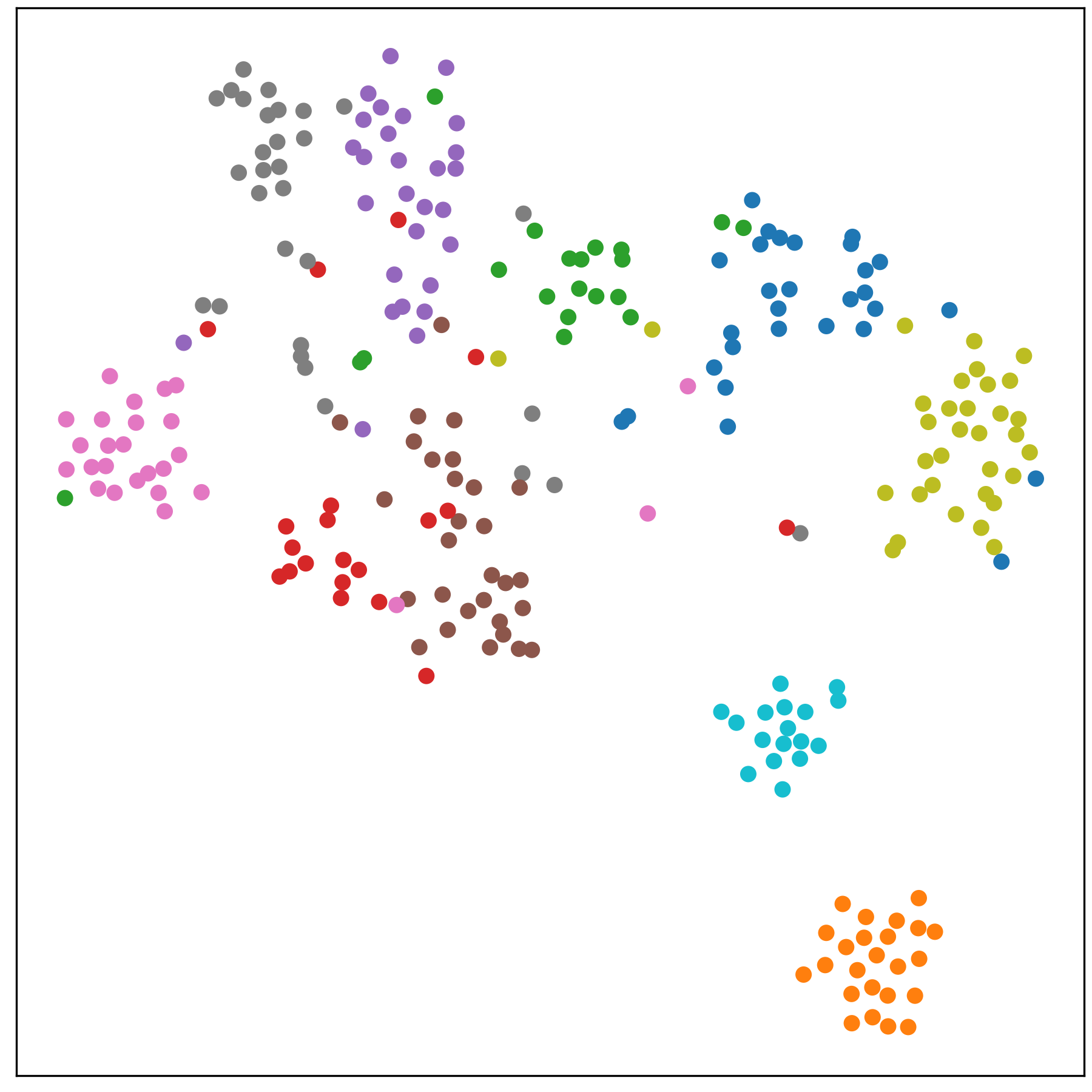}
        \caption{SimCLR}
    \end{subfigure}
    \hfill
    \begin{subfigure}{0.49\linewidth}
        \includegraphics[width=\linewidth, angle=180]{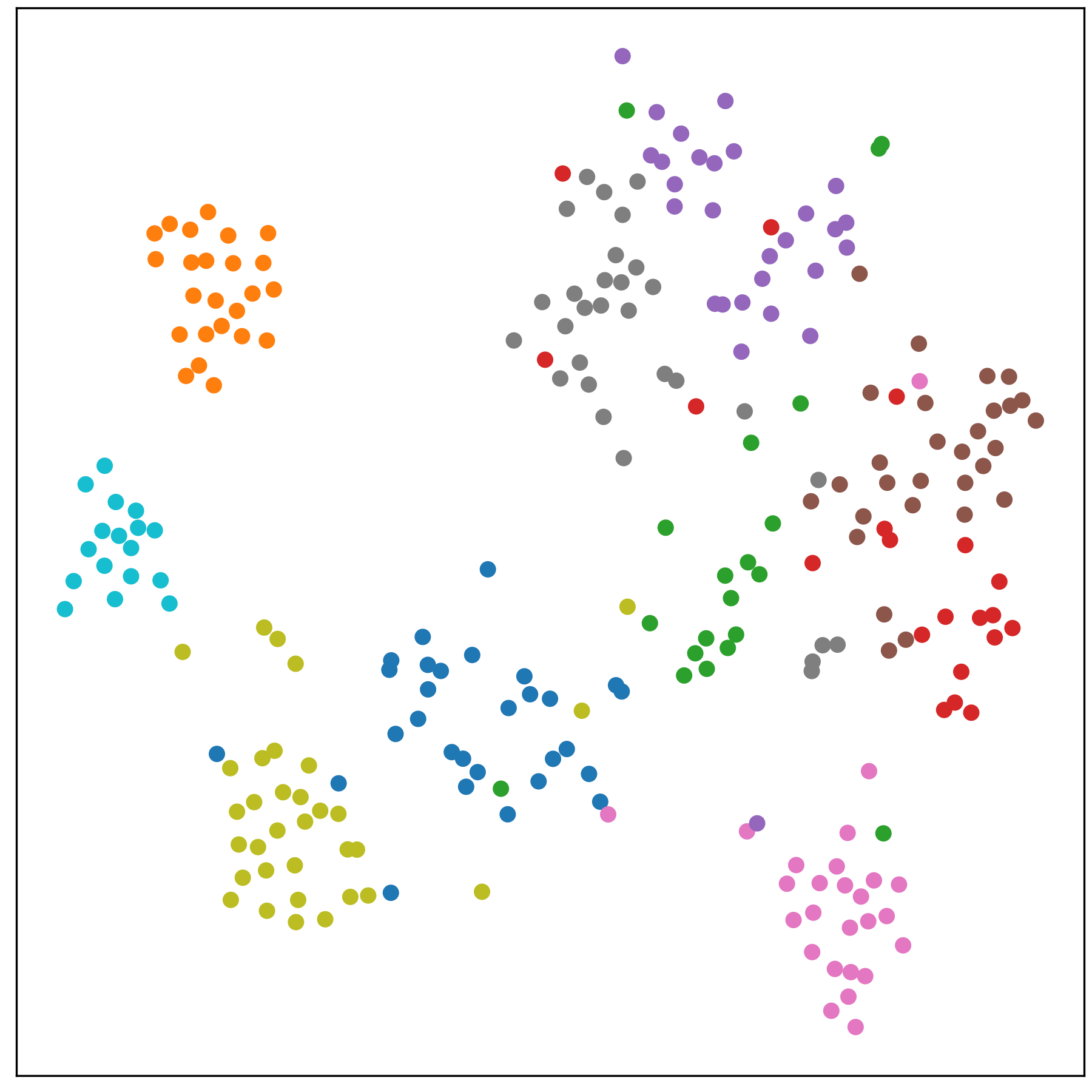}
        \caption{SSA-CLR}
    \end{subfigure}
    \vspace{-2mm}
    \caption{t-SNE visualization of the learned features on CIFAR-10.}
    \label{fig:tsne}
    \vspace{-5mm}
\end{figure}

\section{Ablation Studies}
\subsection{Sharpness-Aware Approaches}
\label{sec:sam+augs}
\begin{figure*}[!ht]
    \centering
    \begin{subfigure}{0.49\textwidth}
        \centering
        \includegraphics[width=0.85\linewidth]{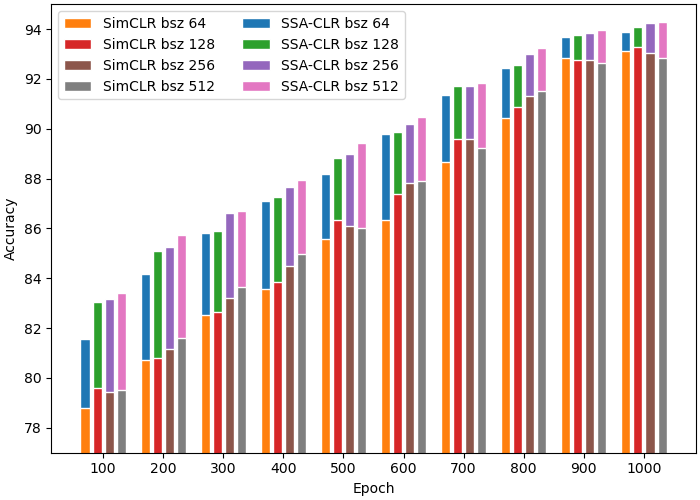}
        \caption{Standard accuracy}
    \end{subfigure}
    \hfill
    \begin{subfigure}{0.49\textwidth}
        \centering
        \includegraphics[width=0.85\linewidth]{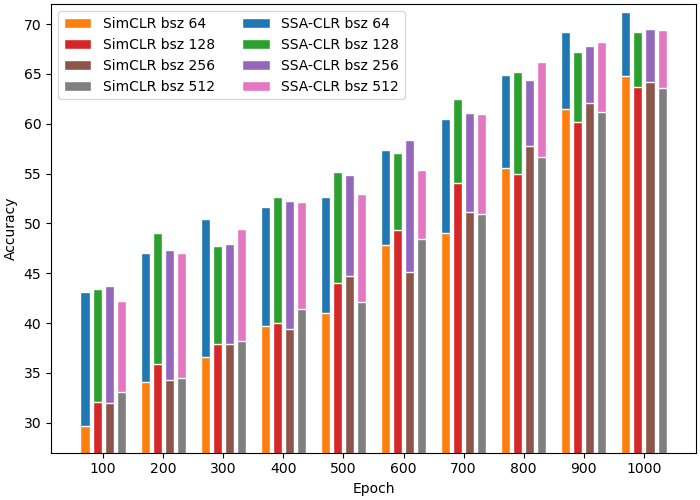}
        \caption{Robust accuracy}
    \end{subfigure}
    \vspace{-2mm}
    \caption{Test accuracy of models under different batch sizes and numbers of training epochs on CIFAR-10 (higher is better).}
    \label{fig:bsz-saclr}
    \vspace{-4mm}
\end{figure*}

In this section, we evaluate the performance of various sharpness-aware approaches to self-supervised learning methods. All experiments in this section are run with batch size 256 for 500 epochs. With vanilla SimCLR as a baseline, we compare test set accuracy of these methods using the linear evaluation procedure. For these experiments, we try applying Sharpness-Aware Minimization \cite{foret2021sharpnessaware,kwon2021asam} with different configurations, and FFT augmentations with different settings and hyperparameters. Specifically, for SAM we use $\rho=0.05$ as recommended in \cite{foret2021sharpnessaware}; and for Adaptive SAM $\rho=2.0$ based on the tuning results from \cite{phan2022improving}.

To combat distribution shift, we try our aforementioned approaches of utilizing frequency-domain augmentations \cite{Xu_2021_CVPR}, replacing the previous augmentations with our new FFT augmentations. For this experiment, we try different hyperparameter values $\alpha\in\{0.1, 0.2, 0.5, 1.0\}$. We report the obtained results in \autoref{tab:fft}.

Enforcing smoothness in the self-supervised learning process gives us an improvement in performance, with Adaptive SAM increasing more than 1\%. Adding FFT augmentations, we can increase a further 0.31\% using the default $\alpha=0.2$ hyperparameter used in \cite{Xu_2021_CVPR}. As a result, we use (1) with $\alpha=0.2$ as the default for our method.

\begin{table}[!ht]
    \centering
    \vspace{-1mm}
    \caption{Test accuracy from linear evaluations of different sharpness-aware approaches on CIFAR-10.}
    \vspace{-2mm}
    \begin{tabularx}{\linewidth}{YcY}
        \toprule
        \multicolumn{2}{c}{Method}          & Accuracy \\
        \midrule
        \multicolumn{2}{c}{SimCLR}          & 92.12\%  \\
        \midrule
        \multicolumn{2}{c}{SimCLR + SAM}    & 92.68\%  \\
        \multicolumn{2}{c}{SimCLR + ASAM}   & 93.21\%  \\
        \multicolumn{2}{c}{SimCLR + FFT}    & 92.72\%  \\ 
        \midrule
        \multirow{4}{*}{SSA-CLR} & $\alpha=0.1$ & 93.24\%  \\
                             & $\alpha=0.2$ & \textbf{93.52\%}  \\
                             & $\alpha=0.5$ & 92.84\%  \\
                             & $\alpha=1.0$ & 90.40\%  \\
        \bottomrule
    \end{tabularx}
    \label{tab:fft}
    \vspace{-3mm}
\end{table}

\subsection{Bridging Distribution Shift}
As derived in Eq. (\ref{shift_1}), the gap between the real constrastive distribution and the practical augmentation-based sampling can be quantitatively measured as
{\smaller{}
\[
\sum_c\pi_{c}\mathbb{E}_{x\sim p_{c}}\left[\Vert\mathbb{E}_{x^{+}\sim p_{c}}\left[f_{\theta}\left(x^{+}\right)\right]-\mathbb{E}_{t\sim\mathcal{T},x^{+}=t\left(x\right)}\left[f_{\theta}\left(x^{+}\right)\right]\Vert^{\frac{1}{2}}\right] 
\]
}

We thus proceed to evaluate this term to verify the effectiveness of our augmentation approach in combating this distribution shift, which experiment results can be found in \autoref{tab:dshift}. As expected, this gap is smaller when we apply our FFT augmentations onto data.

\begin{table}[t]
    \centering
    \caption{Distribution shift gap of self-supervised learning methods on CIFAR-10 (lower is better).}
    \vspace{-2mm}
    \begin{tabularx}{\linewidth}{cYY}
        \toprule
        Augmentation & Original & FFT \\
        \midrule
        Gap & 0.856 & 0.848 \\
        \bottomrule
    \end{tabularx}
    \label{tab:dshift}
    \vspace{-5mm}
\end{table}

\subsection{Batch Sizes and Epochs}


We compare our method with SimCLR baseline on batch sizes ranging from 64 to 512, checkpointing at every 100th epoch, by measuring the model's performance with linear evaluation. As we can see, models trained with larger batch sizes generally yield better performance, where they all start to plateau at around 1000 epochs. Moreover, the gap between SSA-CLR and SimCLR narrows as epoch count increases, suggesting that our method converges much faster.
These same phenomena can also be observed in our robust evaluation using the same configurations listed in Section \ref{subsec:robust}.
The full results are plotted in \autoref{fig:bsz-saclr}.

\section{Conclusions}
In this work, we introduce Sharpness \& Shift-Aware Contrastive Learning, where we aim to improve self-supervised learning by enforcing flatness of the feature's extractor; and bridging the gap of between sampling from the ideal contrasting distribution and augmentation-based methods currently being in use. Our theoretical development shows that this loss surface's flatness lets us bound our linear evaluation loss by our contrastive learning loss, guaranteeing downstream performance. For future works, one may be interested in discovering other means of combatting this sampling distribution shift with different augmentations and/or methods.

\clearpage

{\small
\bibliographystyle{ieee_fullname}
\bibliography{ref}
}

\onecolumn
\renewcommand\thesection{\Alph{section}}
\renewcommand\thesubsection{\thesection.\arabic{subsection}}
\setcounter{section}{0}
\section*{Appendix}
\section{Proofs}
We present the proofs of theorems of the main paper.
\begin{thm} \label{thm:1}
Assume the given model, the following inequality holds
\begin{align*}
    \Lc_{\Dc_{\mathrm{sup}}}^{\mathrm{sup}}(\theta) \leq \Lc_{\Dc_{\mathrm{un}}}^{\mathrm{un}}(\theta, p_{\mathrm{pos}}) - O\Big(\frac{1}{\sqrt{K}} \Big) -\log(\beta)- O\Big(\frac{1}{\beta}\Big).
\end{align*}
\end{thm}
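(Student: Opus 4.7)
The plan is to compare both sides by (i) picking a specific mean-classifier $W$ on the supervised side, and (ii) extracting a closed-form asymptotic skeleton of the InfoNCE loss in which the $\log\beta$, $O(1/\sqrt{K})$, and $O(1/\beta)$ slack terms appear transparently. The core identity to establish is that $\mathcal{L}_{\mathcal{D}_{\mathrm{un}}}^{\mathrm{un}}(\theta, p_{\mathrm{pos}})$, up to those slack terms, equals $\log\beta + \mathbb{E}[-f_\theta(x)\cdot f_\theta(x^+)/\tau + \log Z(x)]$, where $Z(x) := \mathbb{E}_{x^- \sim p_{\mathrm{data}}}[\exp(f_\theta(x)\cdot f_\theta(x^-)/\tau)]$. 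A one-line Jensen argument on the supervised side then produces the same expression as an upper bound, and matching the two yields the claim.

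On the unsupervised side, I would rewrite
\[
\mathcal{L}_{\mathcal{D}_{\mathrm{un}}}^{\mathrm{un}}(\theta, p_{\mathrm{pos}}) = \mathbb{E}\Bigl[-\tfrac{f_\theta(x)\cdot f_\theta(x^+)}{\tau} + \log\Bigl(e^{f_\theta(x)\cdot f_\theta(x^+)/\tau} + \tfrac{\beta}{K}\sum_{k=1}^K e^{f_\theta(x)\cdot f_\theta(x_k^-)/\tau}\Bigr)\Bigr],
\]
and then push the $K$-sample empirical average inside the log towards its population mean $Z(x)$ by concentration of the i.i.d.\ terms, paying $O(1/\sqrt{K})$ after a Lipschitz bound on $\log(\cdot+c)$. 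The remaining expression $\log(e^{a}+\beta Z(x))$ then factors as $\log\beta + \log Z(x) + \log(1+e^{a}/(\beta Z(x))) = \log\beta + \log Z(x) + O(1/\beta)$, and taking outer expectations produces the desired skeleton.

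On the supervised side, I would pick the linear classifier whose $c$-th row is the class-mean feature $W_c := \mathbb{E}_{x^+ \sim p_c}[f_\theta(x^+)]$. Since $\mathcal{L}_{\mathcal{D}_{\mathrm{sup}}}^{\mathrm{sup}}(\theta) = \min_W \mathcal{L}_{\mathcal{D}_{\mathrm{sup}}}^{\mathrm{sup}}(\theta, W)$, this particular $W$ already upper-bounds the supervised loss. Expanding the $\tau$-temperature cross entropy and using $\mathbb{E}_{(x,y)}[W_y \cdot f_\theta(x)] = \mathbb{E}_{p_{\mathrm{pos}}}[f_\theta(x)\cdot f_\theta(x^+)]$ recovers $-\mathbb{E}_{p_{\mathrm{pos}}}[f_\theta(x)\cdot f_\theta(x^+)/\tau]$ exactly. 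For the log-partition term, Jensen applied to the convex function $\exp$ gives $\exp(W_{c'}\cdot f_\theta(x)/\tau) \leq \mathbb{E}_{x^- \sim p_{c'}}[\exp(f_\theta(x)\cdot f_\theta(x^-)/\tau)]$, and summing over $c'$ with the class-prior reweighting bounds $\sum_{c'}\exp(W_{c'}\cdot f_\theta(x)/\tau) \leq Z(x)/\pi_{\min}$. Matching the resulting expression against the unsupervised skeleton produces the claimed inequality.

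The main obstacle is making the $O(1/\sqrt{K})$ concentration rigorous: the integrand $e^{f_\theta(x)\cdot f_\theta(x^-)/\tau}$ is a priori unbounded, so this step requires the mild boundedness assumption on features (e.g.\ normalized outputs on the unit sphere) that is standard in contrastive learning, together with a Lipschitz bound on $\log(\cdot + c)$ bounded away from zero to translate concentration of the empirical mean into concentration of its log. A secondary subtlety is the constant $-\log\pi_{\min}$ that falls out of the reweighting in the supervised bound; since it depends on neither $K$ nor $\beta$, I expect to absorb it into the $O(1/\beta)$ slack (or to eliminate it entirely by a centering shift of $W$ that leaves the softmax invariant), consistent with the form written in the theorem statement.
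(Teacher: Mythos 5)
Your approach is essentially the same as the paper's: pick the mean classifier $\bar{W}_c = \mathbb{E}_{x\sim p_c}[f_\theta(x)]$, compare the supervised log-partition with $\log Z(x)$ via Jensen's inequality, pay $O(1/\sqrt{K})$ for replacing the $K$-sample negative average inside the $\log$ by its population mean, and factor out $\log\beta + O(1/\beta)$. The intermediate loss $\overline{\mathcal{L}}_{\mathcal{D}_{\mathrm{un}}}^{\mathrm{un}}$ introduced in the paper is exactly your skeleton $\mathbb{E}\bigl[-f_\theta(x)\cdot f_\theta(x^+)/\tau + \log Z(x)\bigr]$, and the paper's two-step split (bound $\mathcal{L}_{\mathrm{un}} - \overline{\mathcal{L}}_{\mathrm{un}}$, then bound $\overline{\mathcal{L}}_{\mathrm{un}} - \mathcal{L}_{\mathrm{sup}}(\theta,\bar{W})$) matches your plan in structure, key lemma, and slack terms.

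The ``secondary subtlety'' you flag is, however, a genuine gap that neither of your proposed remedies closes. Jensen gives $Z(x) = \sum_c \pi_c \mathbb{E}_{x^-\sim p_c}\bigl[\exp(f_\theta(x)\cdot f_\theta(x^-)/\tau)\bigr] \geq \sum_c \pi_c \exp(\bar{W}_c\cdot f_\theta(x)/\tau)$, which dominates the \emph{unweighted} softmax partition $\sum_c \exp(\bar{W}_c\cdot f_\theta(x)/\tau)$ only up to a factor $\pi_{\min}$, leaving a constant $+\log(1/\pi_{\min})$ on the right-hand side. You cannot absorb this into $O(1/\beta)$, since it does not decay in $\beta$ and appears with the wrong sign to be hidden inside a \emph{subtracted} $O(1/\beta)$ slack; and a centering shift of $W$ adds a single logit-\emph{independent} constant to all logits (which leaves the softmax invariant precisely because it cancels), so it cannot produce the required class-\emph{dependent} shift $\tau\log\pi_c$. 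It is worth noting that the paper's own proof commits exactly this slip: the final equality of its Step 1 silently replaces $\sum_c \pi_c \exp\{\bar{W}_c\cdot f_\theta(x)/\tau\}$ with $\sum_y \exp\{\bar{W}_y\cdot f_\theta(x)/\tau\}$ as though they were equal. A rigorous version of the statement would either carry an explicit $+\log(1/\pi_{\min})$, allow an affine probe with bias $b_c = \tau\log\pi_c$, or assume balanced class priors so the offset is the fixed constant $\log M$.
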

\begin{proof}
The proof has three steps as follows: 
\begin{itemize}
    \item To show that there exists an weight matrix $\bW$ such that $\Lc_{\Dc_{\supp}}^{\supp}(\theta,\bW)\leq \overline{\Lc}_{\Dc_{\text{un}}}^{\text{un}}(\theta) $.
    \item To bound the difference between $\mathcal{L}_{\mathcal{D}_{\mathrm{un}}}^{\text{un}}(\theta) $ and $\overline{\mathcal{L}}_{\mathcal{D}_{\text{un}}}^{\text{un}}(\theta)$.
    \item To show the inequality between $\Lc_{\Dc_{\un}}^{\mathrm{un}}(\theta)$ and $\Lc_{\Dc_{\supp}}^{\supp}$.
\end{itemize}
\textbf{First step:} We choose 
$\bW_{c}=\mathbb{E}_{x\sim p_{c}}\left[f_{\theta}\left(x\right)\right],\forall c\in\mathcal{Y}$ and $\bW = \big[\bW_c \big]_{c=1}^C$
\begin{align}
\mathcal{L}_{\mathcal{D}_{\text{sup}}}^{\text{sup}}\left(\theta,\bar{W}\right) & =\mathbb{E}_{\left(x,y\right)\sim\mathcal{D}_{\text{sup}}}\left[\tau_{\text{CE}}\left(\bar{W}\cdot f_{\theta}\left(x\right),y\right)\right]=\sum_{c=1}^{M}\pi_{c}\mathbb{E}_{x\sim p_{c}}\left[\tau_{\text{CE}}\left(\bar{W}\cdot f_{\theta}\left(x\right),c\right)\right]\nonumber \\
&=  -\sum_{c=1}^{M}\pi_{c}\mathbb{E}_{x\sim p_{c}}\left[\log\frac{\exp\left\{ \bar{W}_{c}\cdot f_{\theta}\left(x\right)/\tau\right\} }{\sum_{y=1}^{M}\exp\left\{ \bar{W}_{y}\cdot f_{\theta}\left(x\right)/\tau\right\} }\right]\nonumber \\
&=  -\sum_{c=1}^{M}\pi_{c}\mathbb{E}_{x\sim p_{c}}\left[\frac{\bar{W}_{c}\cdot f_{\theta}\left(x\right)}{\tau}-\log\left(\sum_{y=1}^{M}\exp\left\{ \frac{\bar{W}_{y}\cdot f_{\theta}\left(x\right)}{\tau}\right\} \right)\right]\nonumber \\
&=-  \sum_{c=1}^{M}\frac{\pi_{c}}{\tau}\bar{W}_{c}\cdot \mathbb{E}_{x\sim p_{c}}\left[f_{\theta}\left(x\right)\right]+\sum_{c=1}^{M}\pi_{c}\mathbb{E}_{x\sim p_{c}}\left[\log\left(\sum_{y=1}^{M}\exp\left\{ \frac{\bar{W}_{y}.f_{\theta}\left(x\right)}{\tau}\right\} \right)\right]\nonumber \\
&=  -\sum_{c=1}^{M}\frac{\pi_{c}}{\tau}\norm{\bar{W}_{c}}^{2}+\sum_{c=1}^{M}\pi_{c}\mathbb{E}_{x\sim p_{c}}\left[\log\left(\sum_{y=1}^{M}\exp\left\{ \frac{\bar{W}_{y}\cdot f_{\theta}\left(x\right)}{\tau}\right\} \right)\right].\label{eq:L_sup}
\end{align}

Define 
$$
\overline{\mathcal{L}}_{\mathcal{D}_{un}}^{un}\left(\theta,p_{\text{pos}}\right)=\mathbb{E}_{\left(x,x^{+}\right)\sim p_{\text{pos}}}\left[-\frac{1}{\tau}f_{\theta}\left(x\right).f_{\theta}\left(x^{+}\right)\right]+\mathbb{E}_{x\sim p_{\text{data}}}\left[\log\mathbb{E}_{x^{-}\sim p_{\text{data}}}\left[\exp\left\{ \frac{f_{\theta}\left(x\right).f_{\theta}\left(x^{-}\right)}{\tau}\right\} \right]\right],
$$
we then show an lower bound for $\overline{\Lc}_{\Dc_{\un}}^{\un}(\theta, p_{pos})$:
\begin{align}
\overline{\mathcal{L}}_{\mathcal{D}_{\text{un}}}^{\text{un}}\left(\theta, p_{pos}\right) & =\mathbb{E}_{\left(x,x^{+}\right)\sim p_{\text{pos}}}\left[-\frac{1}{\tau}f_{\theta}\left(x\right)\cdot f_{\theta}\left(x^{+}\right)\right]+\mathbb{E}_{x\sim p_{\text{data}}}\left[\log\mathbb{E}_{x^{-}\sim p_{\text{data}}}\left[\exp\left\{ \frac{f_{\theta}\left(x\right) \cdot f_{\theta}\left(x^{-}\right)}{\tau}\right\} \right]\right]\nonumber \\
&=  \mathbb{E}_{\left(x,x^{+}\right)\sim p_{\text{pos}}}\left[-\frac{1}{\tau}f_{\theta}\left(x\right) \cdot f_{\theta}\left(x^{+}\right)\right]+\mathbb{E}_{x\sim p_{\text{data}}}\left[\log\left(\sum_{c=1}^{M}\pi_{c}\mathbb{E}_{x^{-}\sim p_{c}}\left[\exp\left\{ \frac{f_{\theta}\left(x\right) \cdot f_{\theta}\left(x^{-}\right)}{\tau}\right\} \right]\right)\right]\nonumber \\
&\geq  \mathbb{E}_{\left(x,x^{+}\right)\sim p_{\text{pos}}}\left[-\frac{1}{\tau}f_{\theta}\left(x\right) \cdot f_{\theta}\left(x^{+}\right)\right]+\mathbb{E}_{x\sim p_{\text{data}}}\left[\log\left(\sum_{c=1}^{M}\pi_{c}\exp\left\{ \frac{f_{\theta}\left(x\right)\cdot \mathbb{E}_{x^{-}\sim p_{c}}\left[f_{\theta}\left(x^{-}\right)\right]}{\tau}\right\} \right)\right]\nonumber \\
&=  -\frac{1}{\tau}\sum_{c=1}^{M}\pi_{c}\mathbb{E}_{x,x^{+}\sim p_{c}}\left[f_{\theta}\left(x\right)\cdot f_{\theta}\left(x^{+}\right)\right]+\mathbb{E}_{x\sim p_{\text{data}}}\left[\log\left(\sum_{c=1}^{M}\pi_{c}\exp\left\{ \frac{W_{c}\cdot f_{\theta}\left(x\right)}{\tau}\right\} \right)\right]\nonumber \\
&=  -\frac{1}{\tau}\sum_{c=1}^{M}\pi_{c}\mathbb{E}_{x\sim p_{c}}\left[f_{\theta}\left(x\right)\right]\cdot \mathbb{E}_{x^{+}\sim p_{c}}\left[f_{\theta}\left(x^{+}\right)\right]+\mathbb{E}_{x\sim p_{\text{data}}}\left[\log\left(\sum_{c=1}^{M}\pi_{c}\exp\left\{ \frac{W_{c}\cdot f_{\theta}\left(x\right)}{\tau}\right\} \right)\right]\nonumber \\
&=  -\sum_{c=1}^{M}\frac{\pi_{c}}{\tau}\norm{\bar{W}_{c}}^{2}+\sum_{c=1}^{M}\pi_{c}\mathbb{E}_{x\sim p_{c}}\left[\log\left(\sum_{y=1}^{M}\exp\left\{ \frac{\bar{W}_{y}\cdot f_{\theta}\left(x\right)}{\tau}\right\} \right)\right].\label{eq:L_un}
\end{align}
Combining  \eqref{eq:L_sup} and \eqref{eq:L_un} , we get
\begin{align*}
    \Lc_{\Dc_{\supp}}^{\supp}(\theta,\bW) \leq \overline{\Lc}_{\Dc_{\un}}^{\un}(\theta, p_{\mathrm{pos}}).
\end{align*}
\textbf{Second step:} We start with decomposing the $\mathcal{L}^{\text{un}}_{\mathcal{D}_{\text{un}}}(\theta, p_{\mathrm{pos}})$
\begin{align*}
\mathcal{L}^{\text{un}}_{\mathcal{D}_{\text{un}}}(\theta, p_{pos}) &= \mathbb{E}_{(x,x^+)\sim p_{\pos},x_{1:K}^{-}\sim p_{\dat}}\left[ -\log \frac{\exp\big\{\frac{ f_{\theta}(x) \cdot f_{\theta}(x^+)}{\tau} \big\}}
{\exp\big\{\frac{f_{\theta}(x) \cdot f_{\theta}(x^+)}{\tau} \big\} + \frac{\beta}{K} \sum_{k=1}^K \exp\big\{\frac{f_{\theta}(x) \cdot f_{\theta}(x_k^{-})}{\tau} \big\}}\right] \\
&= \mathbb{E}_{(x,x^+)\sim p_{\pos},x_{1:K}^{-}\sim p_{\dat}} \Big[ -\frac{f_{\theta}(x) \cdot f_{\theta}(x^+)}{\tau} +\log\Big(\exp\Big\{\frac{f_{\theta}(x) \cdot f_{\theta}(x^+)}{\tau} \Big\} + \frac{\beta}{K} \sum_{k=1}^K \exp\Big\{\frac{f_{\theta}(x) \cdot f_{\theta}(x_k^{-})}{\tau} \Big\}\Big) \Big].
\end{align*}
The first term of $\overline{\mathcal{L}}_{\mathcal{D}_{\text{un}}}^{\text{un}}(f_{\theta})$ is the same as the first term of $\mathcal{L}_{\mathcal{D}_{\text{un}}}^{\text{un}}(f_{\theta})$. Thus, we only have to deal with the second terms of both quantities. We have
\begin{align*}
    \mathbb{E}_{x\sim p_{\dat}}\Big[\log \Big(\mathbb{E}_{x^{-}\sim p_{\dat}}\Big[\exp\big\{\frac{f_{\theta}(x) \cdot f_{\theta}(x^{-})}{\tau}\big\}\Big]\Big)\Big] + \log \beta = \mathbb{E}_{x\sim p_{\pos}}\Big[\log\Big( \beta \mathbb{E}_{x^{-}\sim p_{\dat}}\Big[\exp\big\{\frac{f_{\theta}(x) \cdot f_{\theta}(x^{-})}{\tau}\big\} \Big]\Big) \Big]
\end{align*}
Therefore,
\begin{align*}
\overline{\mathcal{L}}_{\mathcal{D}_{\text{un}}}^{\text{un}}(\theta, p_{pos}) - \mathcal{L}_{\mathcal{D}_{\text{un}}}^{\text{un}}(\theta, p_{pos}) &= \mathbb{E}_{(x,x^+)\sim p_{\pos},x_{1:K}^{-}\sim p_{\dat}} \Bigg[\log\Big(\beta\mathbb{E}_{x^{-}}\Big[\exp\big\{ \frac{f_{\theta}(x) \cdot f_{\theta}(x^{-})}{\tau}\big\}\Big] \Big) - \\
& \qquad \log\Big(\exp\Big\{\frac{f_{\theta}(x) \cdot f_{\theta}(x^+)}{\tau} \Big\} + \frac{\beta}{K} \sum_{k=1}^K \exp\Big\{\frac{f_{\theta}(x) \cdot f_{\theta}(x_k^{-})}{\tau} \Big\} \Big) \Bigg]
\end{align*}
Denote 
\begin{align*}
    \frac{1}{K}\sum_{k=1}^K\exp\Big\{\frac{f_{\theta}(x)\cdot f_{\theta}(x_k^{-})}{\tau} \Big\} - \mathbb{E}_{x^{-}} \Big[\exp\Big\{\frac{f_{\theta}(x) \cdot f_{\theta}(x^{-})}{\tau} \Big\} \Big]&:= Y_n\\
    \mathbb{E}_{x^{-}} \Big[\exp\Big\{\frac{f_{\theta}(x) \cdot f_{\theta}(x^{-})}{\tau} \Big\} \Big]&:= \alpha_x\\
    \exp\Big\{\frac{f_{\theta}(x)\cdot f_{\theta}(x^+)}{\tau} \Big\}&:= Z.
\end{align*}
The inner part of the expectation $\mathbb{E}_{(x,x^+)\sim p_{\pos},x_{1:K}^{-}\sim p_{\dat}}$ is written as
\begin{align*}
    \log\big( \beta \alpha_x\big) - \log\big(\beta(Y_n+ \alpha_x) + Z\big) &= -\log\frac{\beta Y_n + \beta \alpha_x + Z}{\beta \alpha_x} = -\log\Big(1 +\frac{Z}{\beta \alpha_x} + \frac{ Y_n}{\alpha_x} \Big) \\
    &= -\log\big(1 + \frac{Z}{\beta\alpha_x} \big) - \log\Big(1 + \frac{Y_n/\alpha_x}{1 + Z/(\beta \alpha_x)}\Big)
\end{align*}
Thus, we get
\begin{align*}
    \E_{(x,x^+)\sim p_{\pos}, x_{1:K}^{-}\sim p_{\dat}}\Big[-\log\big(1 + \frac{Z}{\beta\alpha_x}\big) -\log \Big(1 + \frac{Y_n/\alpha_x}{1 + Z/(\beta \alpha_x)} \Big)\Big] =
    \E_{(x,x^+)\sim p_{\pos}}\Big[-\log\big(1+ \frac{Z}{\beta\alpha_x} \big) \Big] + \\
    \E_{(x,x^+)\sim p_{\pos}, x_{1:K}^{-}\sim p_{\dat}}\Big[-\log\Big(1 + \frac{Y_n/\alpha_x}{1+Z/(\beta\alpha_x)} \Big) \Big].
\end{align*}
We also have $\exp(-1/\tau)\leq \exp\Big\{\frac{f_{\theta}(x) \cdot f_{\theta}(x^{-})}{\tau} \Big\}\leq \exp(1/\tau)$. It follows that $\exp(-1/\tau)\leq Z,\alpha_x \leq \exp(1/\tau)$. Then we deduce bounds for other quantities 
\begin{align*}
    &\frac{1}{\beta}\exp(-2/\tau)\leq \frac{Z}{\beta\alpha_x} \leq \frac{1}{\beta}\exp(1/\tau) \exp(1/\tau) = \frac{1}{\beta } \exp(2/\tau)\\
    &\exp(-2/\tau)\leq \frac{Y_n+\alpha_x}{\alpha_x} \leq \exp(2/\tau)\\
    & \exp(-2/\tau)-1\leq \frac{Y_n}{\alpha_x} \leq \exp(2/\tau) - 1\\
    & \exp(-2/\tau) - 1\leq \frac{Y_n}{\alpha_x} + \frac{Z}{\beta\alpha_x} \leq \frac{1}{\beta}\exp(2/\tau) + \exp(2/\tau) - 1\\
    &\frac{Y_n/\alpha_x}{1+ Z/(\beta\alpha_x)} \geq \frac{\exp(-2/\tau) -1}{1+\frac{1}{\beta}\exp(-2/\tau)}
\end{align*}
In both terms inside the expectation $\E_{(x,x^+)\sim p_{\pos},x_{1:K}^{-}\sim p_{\dat}}$, we have the form $\log(1+t)$.  

\noindent For $t>0$, we use the inequality 
\begin{align*}
    \log(1+t) \leq t.
\end{align*}
For $-1< t <0$ we have
\begin{align*}
    \big|\log(1+t) \big| &= \big|\log\frac{1}{1-|t|}\big|
    = \big| \log(1 + |t| + |t|^2 + \ldots )\big| \leq |t| + |t|^2 + \ldots + |t|^m + \ldots = |t| \frac{1}{1-|t|}.
\end{align*}
Use these inequalities for $\log(1+t)$, we get
\begin{align*}
\Big|-\log\Big(1+ \frac{Y_n/\alpha_x}{1+ Z/(\beta\alpha_x)}\Big) \Big| &\leq \Big|\frac{Y_n/\alpha_x}{1+ Z/(\beta\alpha_x)}\Big|\times  \frac{1}{1+ \frac{\exp(-2/\tau) -1}{1+\frac{1}{\beta}\exp(-2/\tau)}} = \Big|\frac{Y_n/\alpha_x}{1+ Z/(\beta\alpha_x)}\Big| \times \frac{1+ \frac{1}{\beta}\exp(-2/\tau)}{(1+1/\beta)\exp(-2/\tau)}\\
&\leq \Big|\frac{Y_n/\alpha_x}{1+ Z/(\beta\alpha_x)}\Big| \exp(2/\tau).
\end{align*}
By Cauchy-Schwarz inequality, 
\begin{align*}
    \mathbb{E}_{x_{1:K}^{-}\sim p_{\dat}}|Y_n| \leq \sqrt{\mathbb{E}_{x_{1:K}^{-}} [Y_n^2]} \leq  \sqrt{\frac{\exp(2/\tau)- \exp(-2/\tau)}{K}}. 
\end{align*}
Therefore,
\begin{align*}
   \left| \E_{(x,x^+)\sim p_{\pos},x_{1:K}^{-}\sim p_{\dat}} \Big[-\log\Big(1 + \frac{Y_n/\alpha_x}{1+ Z/(\beta\alpha_x)}\Big) \Big]  \right| \leq \E_{(x,x^+)\sim p_{\pos}} \E_{x_{1:K}\sim p_{\dat}}\Big|\frac{Y_n/\alpha_x}{1+ Z/(\beta\alpha_x)} \Big| \times \exp(2/\tau) \\
   \leq \frac{1}{\alpha_x} \exp(2/\tau) \times \sqrt{\frac{\exp(2/\tau)-\exp(-2/\tau)}{K}} = O(K^{-1/2}).
\end{align*}
For the other term inside $\E_{(x,x^+)\sim p_{\pos},x_{1:K}\sim p_{\dat}}$
\begin{align*}
    \E_{(x,x^+)\sim p_{\pos}}\Big[-\log\big(1 + \frac{Z}{\beta\alpha_x}\big) \Big]
\end{align*}
is a constant and negative, its absolute value is bounded by
\begin{align*}
    \E_{(x,x^+)\sim p_{\pos}}\Big[\log\big(1 + \frac{Z}{\beta\alpha_x}\big) \Big] \leq \E_{(x,x^+)\sim p_{\pos}} \frac{Z}{\beta \alpha_x} \leq \frac{1}{\beta}\exp(2/\tau).
\end{align*}
For a lower bound of $\log(1+t)$, we have
\begin{align*}
    e^{\frac{1}{1+t}-1} \geq \frac{1}{1+t} \Rightarrow e^{\frac{1}{t+1}} \geq \frac{e}{1+t}
\Rightarrow (1+t) e^{\frac{1}{1+t}} \geq e \Rightarrow (1+t) \geq e^{\frac{t}{1+t}} \Rightarrow \log(1+t) \geq \frac{t}{1+t}.
\end{align*}
Hence,
\begin{align*}
    \E_{(x,x+)\sim p_{\pos}}\Big[\log\big(1+\frac{Z}{\beta \alpha_x} \big) \Big] &\geq \E_{(x,x+)\sim p_{\pos}}\Big[\frac{Z/(\beta\alpha_x)}{1 + Z/(\beta\alpha_x)} \Big] \geq \E_{(x,x+)\sim p_{\pos}}\Big[ \frac{Z}{\beta \alpha_x} \Big]\times \frac{1}{1 + \frac{1}{\beta}\exp(2/\tau)} \\
    &\geq \frac{1}{\beta}\frac{\exp(-2\tau)}{1+\frac{1} {\beta}\exp(2/\tau)}. 
\end{align*}
Together, we have 
\begin{align*}
\mathcal{L}_{\mathcal{D}_{\text{un}}}^{\text{un}}(\theta, p_{pos}) -\overline{\mathcal{L}}_{\mathcal{D}_{\text{un}}}^{\text{un}}(\theta, p_{pos}) = \E_{(x,x^+)\sim p_{\pos}} \Big[\log\big(1 +  \frac{Z}{\beta \alpha_x} \big) \Big] + O\big(\frac{1}{\sqrt{K}} \big) = O\big(\frac{1}{\beta} \big) + O\big(\frac{1}{\sqrt{K}} \big).
\end{align*}
\textbf{Last step:} We have
\begin{align*}
\Lc_{\Dc_{\supp}}^{\supp}(\theta,\bW) \geq \inf_{W}\Lc_{\Dc_{\supp}}^{\supp}(\theta,W) = \Lc_{\Dc_{\supp}}^{\supp}(\theta).
\end{align*}
It follows that
\begin{align*}
    \Lc_{\Dc_{\supp}}^{\supp}(\theta) \leq \Lc_{\Dc_{\unm}}^{\unm}(\theta, p_{pos}) - O\Big(\frac{1}{\sqrt{K}} \Big) -\log(\beta)- O\Big(\frac{1}{\beta}\Big).
\end{align*}

\end{proof}

\begin{thm} \label{thm:2}
    For $0< \delta < 1$, with probability $1-\delta$ over the random choice of $\Sc \sim \Dc_{\unm}^N$, we have
    \begin{align*}
        \Lc_{\Dc_{\unm}}^{\unm}(\theta,p_{\mathrm{pos}}) \leq \max_{\theta^{\prime}: \|\theta^{\prime}- \theta\|< \rho} \Lc_{\Sc}^{\unm}(\theta,p_{\mathrm{pos}}) + \frac{1}{\sqrt{N}}\Bigg[ T\log\Big(1+\frac{\|\theta\|^2}{T\sigma^2} \Big) + \log\frac{1}{\delta} + O\Big(\log(N+T)\Big) + \frac{L^2}{8} + 2L \Bigg]
    \end{align*}
    under the assumptions that for any $\sigma >0$, $\Lc_{\Dc_{\mathrm{un}}}^{\mathrm{un}}(\theta) \leq \E_{\theta^{\prime}\sim \Nc(\theta,\sigma^2 \mathbb{I})}$  where $L = \frac{2}{\tau} + \log(1+\beta) $ and $\rho = \sigma\big(\sqrt{T} +\sqrt{\log(N)} \big)$.
\end{thm}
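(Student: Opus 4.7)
The plan is to invoke a PAC-Bayesian generalization bound for bounded losses with a Gaussian posterior centred at $\theta$, and then convert the expectation-over-posterior into the sharpness-style maximum-over-ball via Gaussian tail concentration. First I would establish that the per-example InfoNCE loss is pointwise bounded on the whole parameter space. Assuming features are $\ell_2$-normalised, every inner product $f_{\theta}(x)\cdot f_{\theta}(x^{\prime})$ lies in $[-1,1]$, so writing the per-sample loss as
\[
\log\!\Bigl(e^{f_\theta(x)\cdot f_\theta(x^+)/\tau} + \tfrac{\beta}{K}\sum_{k=1}^{K}e^{f_\theta(x)\cdot f_\theta(x_k^-)/\tau}\Bigr) - \tfrac{1}{\tau}f_{\theta}(x)\cdot f_{\theta}(x^+)
\]
and bounding each exponent in $[-1/\tau,1/\tau]$ shows the loss has range at most $L=\tfrac{2}{\tau}+\log(1+\beta)$. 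In particular it is $(L^{2}/4)$-subgaussian, which is the hypothesis needed to feed into the subgaussian PAC-Bayes bound of \cite{JMLR:v17:15-290}.

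Second, I would apply that PAC-Bayes bound with Gaussian prior $P_{\sigma}=\Nc(0,\sigma^{2}\mathbb{I})$ and data-dependent Gaussian posterior $Q=\Nc(\theta,\sigma^{2}\mathbb{I})$ on $\Theta=\mathbb{R}^{T}$. Using the standard identity $\mathrm{KL}(Q\,\Vert\,P_{\sigma})=\norm{\theta}^{2}/(2\sigma^{2})$, for any fixed $\sigma$ this yields, with probability at least $1-\delta$,
\[
\E_{\theta^{\prime}\sim Q}\Lc_{\Dc_{\unm}}^{\unm}(\theta^{\prime},p_{\mathrm{pos}}) \;\leq\; \E_{\theta^{\prime}\sim Q}\Lc_{\Sc}^{\unm}(\theta^{\prime},p_{\mathrm{pos}}) \;+\; \frac{1}{\sqrt{N}}\Bigl[\tfrac{\norm{\theta}^{2}}{2\sigma^{2}}+\log\tfrac{1}{\delta}+\tfrac{L^{2}}{8}\Bigr].
\]
Because the optimal $\sigma$ depends on the unknown $\norm{\theta}$, I would then union-bound over a geometric grid $\sigma_{j}=c\cdot 2^{-j}$ with $O(\log(N+T))$ levels, paying an additive $O(\log(N+T))$ price and smoothing $\norm{\theta}^{2}/(2\sigma^{2})$ into the familiar $\tfrac{T}{2}\log(1+\norm{\theta}^{2}/(T\sigma^{2}))$ shape used in \cite{foret2021sharpnessaware} and related flat-minima PAC-Bayes work. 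The stated assumption $\Lc_{\Dc_{\unm}}^{\unm}(\theta,p_{\mathrm{pos}})\leq \E_{\theta^{\prime}\sim \Nc(\theta,\sigma^{2}\mathbb{I})}\Lc_{\Dc_{\unm}}^{\unm}(\theta^{\prime},p_{\mathrm{pos}})$ then upgrades the left-hand side from the posterior average to the loss at $\theta$ itself.

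Third, to convert the posterior-averaged empirical loss into $\max_{\norm{\theta^{\prime}-\theta}<\rho}\Lc_{\Sc}^{\unm}(\theta^{\prime},p_{\mathrm{pos}})$, I would use the fact that $\norm{\theta^{\prime}-\theta}^{2}/\sigma^{2}$ is $\chi^{2}_{T}$ when $\theta^{\prime}\sim Q$. A standard chi-squared tail bound gives $\norm{\theta^{\prime}-\theta}\leq \sigma(\sqrt{T}+\sqrt{\log N})=\rho$ except on an event of probability at most $1/\sqrt{N}$. Splitting the expectation on this event and its complement and using the uniform upper bound $L$ on the bad event yields
\[
\E_{\theta^{\prime}\sim Q}\Lc_{\Sc}^{\unm}(\theta^{\prime},p_{\mathrm{pos}}) \;\leq\; \max_{\norm{\theta^{\prime}-\theta}<\rho}\Lc_{\Sc}^{\unm}(\theta^{\prime},p_{\mathrm{pos}}) + \tfrac{2L}{\sqrt{N}},
\]
which is exactly the $2L/\sqrt{N}$ summand appearing in the claim. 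Chaining the three steps and taking one final union bound over the $\sigma$-grid and the above tail event delivers the inequality.

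The main technical obstacle I expect is the $\sigma$-grid step: the Foret et al.\ SAM argument is tailored to the $\{0,1\}$-loss via McAllester's PAC-Bayes inequality, whereas here the InfoNCE loss is real-valued and the subgaussian PAC-Bayes bound from \cite{JMLR:v17:15-290} must be applied with a data-dependent posterior scale. Getting the $\tfrac{T}{2}\log(1+\norm{\theta}^{2}/(T\sigma^{2}))$ form to emerge cleanly, rather than a crude $\norm{\theta}^{2}/\sigma^{2}$, requires choosing the grid resolution so that the discretisation error in $\sigma$ is dominated by the $1/\sqrt{N}$ slack already present, which is the calculation that demands the most care.
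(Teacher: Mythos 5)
Your overall strategy matches the paper's: bound the InfoNCE loss by $L=\frac{2}{\tau}+\log(1+\beta)$, apply the Alquier-style PAC-Bayes bound of \cite{JMLR:v17:15-290} with a Gaussian posterior $\Nc(\theta,\sigma^{2}\mathbb{I}_T)$ and inverse temperature $\sqrt{N}$ (giving the $L^{2}/8$ term), convert the posterior average of the empirical loss into $\max_{\Vert\theta'-\theta\Vert<\rho}$ plus $2L/\sqrt{N}$ via chi-square concentration with $\sigma=\rho/(\sqrt{T}+\sqrt{\log N})$, and use the stated assumption to replace the posterior-averaged population loss by $\Lc_{\Dc_{\unm}}^{\unm}(\theta,p_{\mathrm{pos}})$. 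Those steps are fine.

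The gap is in the step you yourself flag as the main obstacle: how the $\frac{T}{2}\log\bigl(1+\frac{\Vert\theta\Vert^{2}}{T\sigma^{2}}\bigr)$ term arises. You take prior and posterior with the \emph{same} variance and propose to grid that common scale. With equal variances the KL is exactly $\Vert\theta\Vert^{2}/(2\sigma^{2})$, and since $\log(1+x)\le x$, this is precisely the quantity the logarithmic term is supposed to improve upon; no choice of a shared scale can produce the claimed bound at the \emph{fixed} $\sigma$ tied to $\rho$. If you enlarge the shared variance to shrink the KL, the posterior spreads out and the chi-square step then gives a maximum over a ball of radius $\sigma_j(\sqrt{T}+\sqrt{\log N})>\rho$, so you prove a different statement. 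The needed idea — and what the paper does — is to decouple the two scales: keep the posterior at $\Nc(\theta,\sigma^{2}\mathbb{I}_T)$ with $\sigma$ determined by $\rho$, and choose the \emph{prior} variance close to the KL-minimizing value $\sigma^{2}+\Vert\theta\Vert^{2}/T$ from a pre-committed countable family $\sigma_{P_j}^{2}=c\,e^{(1-j)/T}$, $c=\sigma^{2}(1+e^{4N/T})$, union-bounded with weights $\delta_j\propto j^{-2}$; the full Gaussian--Gaussian KL (including the $T\log(\sigma_{P}^{2}/\sigma^{2})$ term) then collapses to $\frac{1}{2}\bigl[1+T\log\bigl(1+\frac{\Vert\theta\Vert^{2}}{T\sigma^{2}}\bigr)\bigr]$, and the regime $\Vert\theta\Vert^{2}>T\sigma^{2}(e^{4N/T}-1)$ is handled separately because there the right-hand side already exceeds $L$. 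Two further inaccuracies in your grid plan: the resolution must be multiplicative of order $e^{1/T}$ (a factor-$2$ grid leaves an $O(T)$ additive discretization error inside the bracket, which is \emph{not} absorbed by the $1/\sqrt{N}$ factor, since the target slack there is only $O(\log(N+T))$), and consequently the family has $O(N+T)$ members — the $O(\log(N+T))$ is the union-bound price $2\log j^{*}$, not the number of grid levels.
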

\begin{proof}
    Given $z = \Big[x,x^+,\big[x_k^{-} \big]_{k=1}^K \Big]$ where $(x,x^+)\sim p_{\pos}$, $x_{1:K}^{-}\overset{\text{iid}}{\sim} p_{\dat}$. We recall the loss function is
    \begin{align*}
        \ell(f_{\theta}(x)) &= -\log \frac{\exp\Big\{\frac{f_{\theta}(x) \cdot f_{\theta}(x^{+})}{\tau} \Big\}}{\exp\Big\{\frac{f_{\theta}(x) \cdot f_{\theta}(x^{+})}{\tau} \Big\} + \frac{\beta}{K}\sum_{k=1}^K \exp \Big\{\frac{f_{\theta}(x) \cdot f_{\theta}(x_k^{-})}{\tau} \Big\}} \\
        &= \log \frac {\exp\Big\{\frac{f_{\theta}(x) \cdot f_{\theta}(x^{+})}{\tau} \Big\} + \frac{\beta}{K}\sum_{k=1}^K \exp \Big\{\frac{f_{\theta}(x) \cdot f_{\theta}(x_k^{-})}{\tau} \Big\}}{\exp\Big\{\frac{f_{\theta}(x) \cdot f_{\theta}(x^{+})}{\tau} \Big\}} \\
        &\leq \log\frac{e^{1/\tau}+ \beta e^{1/\tau}}{e^{-1/\tau}} \\
        &= \frac{2}{\tau} + \log(1+\beta).
    \end{align*}
We use the PAC-Bayes theory for $P= \Nc(\mathbf{0},\sigma_P^2\mathbb{I}_T)$ and $Q = \Nc(\theta,\sigma^2 \mathbb{I}_T)$ are the prior and posterior distributions, respectively.

By using the bound in \cite{JMLR:v17:15-290}, with probability at least $1-\delta$, we have
    \begin{align*}
        \Lc_{\Dc_{\unm}}^{\unm}(\theta,Q)\leq \Lc_{\Sc}^{\unm}(\theta, Q) + \frac{1}{\beta}\Big[ \mathsf{KL}(Q\|P) + \log \frac{1}{\delta} + \Psi(\beta,N)\Big],
    \end{align*}
    where we have defined
    \begin{align*}
        \Psi(\beta,N) = \log \E_{P}\E_{\Dc_{\unm}^N}\Big[ \exp\Big\{\beta \big(\Lc_{\Dc_{\unm}}^{\unm}(\theta) - \Lc_{\Sc}^{\unm}(\theta) \big) \Big\}\Big]
    \end{align*}
    Since the loss function is bounded by $L$, we have
    \begin{align*}
        \Psi(\beta,N) \leq \frac{\beta^2 L^2}{8N}.
    \end{align*}
     Thus, we get 
     \begin{align}
        \Lc_{\Dc_{\unm}}^{\unm}(\theta,Q)\leq \Lc_{\Sc}^{\unm}(\theta, Q) + \frac{1}{\beta}\Big[ \mathsf{KL}(Q\|P) + \log \frac{1}{\delta} + \frac{\beta^2L^2}{8N}\Big]. \label{ineq:DS_beta}
    \end{align}
    By Cauchy inequality,
    \begin{align}
        \frac{1}{\beta} \Big[\mathsf{KL}(Q\|P) + \log \frac{1}{\delta} + \frac{\beta^2L^2}{8N} \Big] &= \frac{1}{\beta}\Big[\mathsf{KL}(Q\|P) +\log \frac{1}{\delta} \Big]  + \frac{\beta L^2}{8N} \geq \frac{\sqrt{\mathsf{KL}(Q\|P) +\log\frac{1}{\delta}}}{\sqrt{2N}}\times L \nonumber\\
        &\geq \frac{\sqrt{\mathsf{KL}(Q\|P)}}{\sqrt{2N}}\times L. \label{bound:error_term}
    \end{align}
    Since $P$ and $Q$ are Gaussian distribution, the KL divergence between $Q$ and $P$ is equal to
    \begin{align*}
        \mathsf{KL}(Q\|P) = \frac{1}{2}\Big[\frac{T\sigma^2 +\|\theta\|^2}{\sigma_P^2} - T + T\log\frac{\sigma_P^2}{\sigma^2} \Big],
    \end{align*}
    where $T$ is the number of coordinate of $\theta$. Let us consider the KL divergence term $\mathsf{KL}(Q\|P)$ as a function of $\sigma_P^2$, then its derivative with respect to $\sigma_P^2$ is equal to
\begin{align*}
    T\frac{1}{\sigma_P^2} - \frac{T\sigma^2 + \|\theta\|^2}{\sigma_P^4} = \frac{T}{\sigma_P^2}\Big[1 - \frac{\sigma^2 + \|\theta\|^2/T}{\sigma_P^2}\Big],
\end{align*}
which is equal to zero when $\sigma_P^2 = \sigma^2 + \|\theta\|^2/T$. Thus
\begin{align*}
     \mathsf{KL}(Q\|P) \geq \frac{T}{2}\log\big(1 + \frac{\|\theta\|^2}{T\sigma^2}\big).
\end{align*}
Together with \eqref{bound:error_term}, we get
\begin{align*}
    \frac{1}{\beta} \Big[\mathsf{KL}(Q\|P) + \log \frac{1}{\delta} + \frac{\beta^2L^2}{8N} \Big] \geq \frac{L}{\sqrt{2N}}\sqrt{\frac{T}{2}\log\big( 1+\frac{\|\theta\|^2}{T\sigma^2}\big)} \geq L .
\end{align*}
when $\|\theta\|^2 \geq T\sigma^2 \Big[\exp\frac{4N}{T}-1\Big]$. Since the loss function $\ell$ is bounded by $L$, if $\|\theta\|^2 \geq T\sigma^2 \Big[\exp\frac{4N}{T}-1\Big]$, then the RHS of \eqref{ineq:DS_beta} is already greater than $L$. Therefore, we only need to consider the case that $\|\theta\|^2 \leq T\sigma^2 \Big[\exp\frac{4N}{T}-1\Big]$.

We need to specify $P$ in advance, since it is a prior distribution. However, we do not know in advance the value of $\theta$ that affect the KL divergence term. Hence, we build a family of distribution $P$ as follows:
    \begin{align*}
        \mathfrak{P} = \Big\{P_j = \Nc(\mathbf{0},\sigma_{P_j}^2\mathbb{I}_T): \sigma_{P_j}^2 = c \exp\big(\frac{1-j}{T}\big), c = \sigma^2 \big(1 + \exp\frac{4N}{T} \big), j = 1,2,\ldots\Big\}
    \end{align*}
Set $\delta_j = \frac{6\delta}{\pi^2j^2}$, the below inequality holds with probability at least $1-\delta_j$
 \begin{align*}
        \Lc_{\Dc_{\unm}}^{\unm}(\theta,Q)\leq \Lc_{\Sc}^{\unm}(\theta, Q) + \frac{1}{\beta}\Big[ \mathsf{KL}(Q\|P_j) + \log \frac{1}{\delta_j} + \frac{\beta^2L^2}{8N}\Big].
    \end{align*}
Thus, with probability $1-\delta$ the above inequalities hold  for all $P_j$.  We choose
\begin{align*}
    j^* =  \left \lfloor 1 + T\log\left(\frac{\sigma^2\big(1+\exp\{ 4N/T\} \big)}{\sigma^2 +\|\theta\|^2/T}\right) \right\rfloor. 
\end{align*}
Since $\frac{\|\theta\|^2}{T} \leq \sigma^2 \big[\exp\frac{4N}{T} -1 \big]$, we get $\sigma^2 + \frac{\|\theta\|^2}{T} \leq \sigma^2 \exp\frac{4N}{T} $, thus $j^*$ is well-defined.  We also have
\begin{align*}
   &T \log\frac{c}{\sigma^2+ \|\theta\|^2/T }\leq j^* \leq 1 + T\log \frac{c}{\sigma^2 + \|\theta\|^2/T} \\
   \Rightarrow\quad & \log\frac{c}{\sigma^2+ \|\theta\|^2/T } \leq \frac{j^*}{T} \leq \frac{1}{T} + \log \frac{c}{\sigma^2 + \|\theta\|^2/T}\\
   \Rightarrow \quad & -\frac{1}{T} + \log \frac{\sigma^2 + \|\theta\|^2/T}{c} \leq \frac{-j^*}{T}\leq \log\frac{\sigma^2 + \|\theta\|^2/T}{c} \\
   \Rightarrow \quad & e^{-1/T} \frac{\sigma^2+ \|\theta\|^2/T}{c} \leq e^{-j^*/T} \leq \frac{\sigma^2 + \|\theta\|^2/T}{c} \\
   \Rightarrow \quad & \sigma^2 + \frac{\|\theta\|^2}{T} \leq c e^{\frac{1-j^*}{T}} \leq e^{\frac{1}{T}}\Big(\sigma^2 + \frac{\|\theta\|^2}{T} \Big)\\
   \Rightarrow \quad &  \sigma^2 + \frac{\|\theta\|^2}{T} \leq \sigma_{P_{j^*}}^2 \leq e^{\frac{1}{T}}\Big(\sigma^2 + \frac{\|\theta\|^2}{T}\Big).
\end{align*}
 Hence, we have
\begin{align*}
    \mathsf{KL}(Q\|P_{j^*}) &=\frac{1}{2}\Big[\frac{T\sigma^2 +\|\theta\|^2}{\sigma_{P_{j^*}}^2} - T + T\log\frac{\sigma_{P_{}j^*}^2}{\sigma^2} \Big] \leq \frac{1}{2}\Big[\frac{T\sigma^2 +\|\theta\|^2}{\sigma^2 + \|\theta\|^2/T} - T + T\log\frac{e^{1/T}\big(\sigma^2 + \|\theta\|^2/T \big)}{\sigma^2}  \Big] \\
    &\leq  \frac{1}{2}\Big[1+ T\log\big(1 + \frac{\|\theta\|^2}{T\sigma^2}\big) \Big].
\end{align*}
For the term $\log\frac{1}{\delta_{j^*}}$, use the inequality $\log(1+e^t) \leq 1 + t$ for $t>0$,
\begin{align*}
    \log\frac{1}{\delta_{j^*}} &= \log \frac{(j^*)^2\pi^2}{6\delta}  = \log\frac{1}{\delta}  + \log\Big(\frac{\pi^2}{6}\Big) + 2\log(j^*) \\
    &\leq \log\frac{1}{\delta} + \log\frac{\pi^2}{6} + 2\log \Big( 1+T\log\frac{\sigma^2\big(1+ \exp(4N/T)\big)}{\sigma^2 + \|\theta\|^2/T}\Big)  \\
    &\leq \log\frac{1}{\delta} + \log\frac{\pi^2}{6} + 2\log\Big(1+ T\log\big(1+\exp(4N/T)\big)\Big) \\
    &\leq \log\frac{1}{\delta} + \log\frac{\pi^2}{6} + 2\log\Big(1+ T\big(1+\frac{4N}{T} \big) \Big) \\
    &\leq \log\frac{1}{\delta} + \log\frac{\pi^2}{6} + \log(1+T + 4N).
\end{align*}
Choosing $\beta = \sqrt{N}$, with probability at least $1-\delta$ we get
\begin{align*}
    \frac{1}{\beta} \Big[\mathsf{KL}(Q\|P_{j^*}) + \log \frac{1}{\delta_{j^*}} + \frac{\beta^2 L^2}{8N} \Big] \leq \frac{1}{\sqrt{N}}\Big[\frac{1}{2}+\frac{T}{2}\log\Big(1 +\frac{\|\theta\|^2}{T\sigma^2}\Big) + \log \frac{1}{\delta} + 6\log(N+T)\Big] + \frac{L^2}{8\sqrt{N}}
\end{align*}
Since $\|\theta^{\prime}-\theta\|^2$ is $T$ chi-square distribution, for any positive $t$,
we have
\begin{align*}
    \mathbb{P}\big(\|\theta^{\prime}-\theta\|^2 - T \sigma^2 \geq 2\sigma^2 \sqrt{Tt} + 2t\sigma^2\big) \big) \leq \exp(-t).
\end{align*}
By choosing $t = \frac{1}{2}\log(N)$, with probability $1-N^{-1/2}$, we have
\begin{align*}
    \|\theta^{\prime}-\theta\|^2 \leq \sigma^2 \log(N) + T\sigma^2 + \sigma^2\sqrt{2 T\log(N)} \leq T\sigma^2 \Big(1 + \sqrt{\frac{\log(N)}{T}} \Big)^2.
\end{align*}
 By setting $\sigma = \rho\times \big(\sqrt{T} + \sqrt{\log(N)}\big)^{-1}$, we have $\|\theta^{\prime}-\theta\|^2 \leq \rho^2$. Hence, we get
 \begin{align*}
     \Lc_{\Sc}\Big(\theta^{\prime},\mathcal{N}(\theta,\sigma^2\mathbb{I}_T)\Big) &= \mathbb{E}_{\theta^{\prime}\sim\mathcal{N}(\theta,\sigma^2\mathbb{I}_T)}\mathbb{E}_{\Sc}\big[f_{\theta^{\prime}} \big] = \int_{\|\theta^{\prime}-\theta\|\leq \rho} \mathbb{E}_{\Sc}\big[f_{\theta^{\prime}}\big]d\mathcal{N}(\theta,\sigma^2\mathbb{I}_T)+ \int_{\|\theta^{\prime}-\theta\|> \rho} \mathbb{E}_{\Sc}\big[f_{\theta^{\prime}} \big] d\mathcal{N}(\theta,\sigma^2\mathbb{I}_T)\\
     &\leq \Big(1-\frac{1}{\sqrt{N}} \Big)\max_{\|\theta^{\prime} - \theta\|\leq \rho} \Lc_{\Sc}(\theta^{\prime}) + \frac{1}{\sqrt{N}}L \\
     &\leq \max_{\|\theta^{\prime}-\theta\|_2 \leq \rho} \Lc_{\Sc}(\theta^{\prime}) + \frac{2L}{\sqrt{N}}.
 \end{align*}
Together,
\begin{align*}
    \Lc_{\Dc_{\unm}}^{\unm}(\theta,p_{\mathrm{pos}}) \leq \max_{\theta^{\prime}: \|\theta^{\prime}- \theta\|< \rho} \Lc_{\Sc}^{\unm}(\theta,p_{\mathrm{pos}}) + \frac{1}{\sqrt{N}}\Bigg[ \frac{T}{2}\log\Big(1+\frac{\|\theta\|^2}{T\sigma^2} \Big) + \log\frac{1}{\delta} + O\Big(\log(N+T)\Big) + \frac{L^2}{8} + 2L \Bigg].
\end{align*}

\end{proof}

\begin{thm} \label{thm:sharpness_ideal}
    For $0< \delta < 1$, with the probability at least $1-\delta$ over
the random choice of $\mathcal{S}\sim\mathcal{D}_{\mathrm{un}}^{N}$, we have
the following inequality 

\begin{align*}
&\mathcal{L}_{\mathcal{D}_{\mathrm{un}}}^{\mathrm{un}}\left(\theta,p_{\mathrm{pos}}\right)  \leq\max_{\theta':\Vert\theta'-\theta\Vert<\rho}\mathcal{L}_{\mathcal{S}}^{\mathrm{un}}\left(\theta,p_{\mathrm{pos}}\right)+\quad \frac{1}{\sqrt{N}}\Biggl[  \frac{T}{2}\log\left(1+\frac{\norm{\theta}^{2}}{T\sigma^{2}}\right)+\log\frac{1}{\delta}+\frac{L^{2}}{8}+2L + O\Big(\log(N+T)\Big)\Biggr],
\end{align*}
under the condition that for any $\sigma >0$, $\Lc_{\Dc_{\mathrm{un}}}^{\mathrm{un}}(\theta) \leq \E_{\theta^{\prime}\sim \Nc(\theta,\sigma^2 \mathbb{I}_T)}$, where $L = \frac{2}{\tau} + \log(1+\beta) $ and $\rho = \sigma\big(\sqrt{T} +\sqrt{\log(N)} \big)$.
\end{thm}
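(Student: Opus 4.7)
The plan is to instantiate the PAC-Bayesian framework for the bounded InfoNCE loss and then convert the expected-perturbation bound into the worst-case perturbation bound that appears in the statement. First I would verify the uniform bound $\ell(f_\theta(x)) \le L := \frac{2}{\tau} + \log(1+\beta)$ on the per-sample InfoNCE loss by noting that inner products $f_\theta(x)\cdot f_\theta(x')$ lie in $[-1,1]$ (since features are unit-normalized implicitly through the temperature scaling used in the formulation), so the numerator in the InfoNCE logit is at least $e^{-1/\tau}$ while the denominator is at most $(1+\beta)e^{1/\tau}$. This boundedness is what lets me invoke a sub-Gaussian moment-generating-function bound on $\Psi(\beta,N) \le \beta^2 L^2/(8N)$ in the Bégin--Germain-style PAC-Bayes inequality of~\cite{JMLR:v17:15-290}.

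Next, I would take $P = \mathcal{N}(\mathbf{0}, \sigma_P^2 \mathbb{I}_T)$ and $Q = \mathcal{N}(\theta, \sigma^2 \mathbb{I}_T)$, compute the Gaussian KL explicitly, and optimize it in $\sigma_P^2$, which yields the familiar $\frac{T}{2}\log(1+\|\theta\|^2/(T\sigma^2))$ term, modulo an additive constant. Because the optimal $\sigma_P^2$ depends on the unknown $\|\theta\|$, the standard trick is to discretize: define a geometric grid $\sigma_{P_j}^2 = c\,e^{(1-j)/T}$ with $c = \sigma^2(1+e^{4N/T})$, apply the PAC-Bayes bound at confidence $\delta_j = 6\delta/(\pi^2 j^2)$, union-bound over $j$, and pick the index $j^\star$ closest to the optimizer. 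The trivial cutoff $\|\theta\|^2 \ge T\sigma^2(e^{4N/T}-1)$ can be discarded because the RHS already exceeds $L$ there. Selecting $\beta = \sqrt{N}$ balances the KL term with $\beta L^2/(8N)$ and produces the $1/\sqrt{N}$ prefactor in the statement.

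The remaining step, and the subtlest one, is converting $\mathcal{L}_{\mathcal{S}}^{\mathrm{un}}(\theta, Q)$ into the sup-over-$\rho$-ball quantity $\max_{\|\theta'-\theta\|<\rho} \mathcal{L}_{\mathcal{S}}^{\mathrm{un}}(\theta', p_{\mathrm{pos}})$. For this I split the Gaussian expectation according to whether $\theta' \in B(\theta,\rho)$ or not, use the tail bound for the $T$-dimensional chi-square ($\mathbb{P}(\|\theta'-\theta\|^2 \ge T\sigma^2(1+\sqrt{\log N / T})^2) \le 1/\sqrt{N}$), and tune $\sigma = \rho/(\sqrt{T}+\sqrt{\log N})$ so the ball $B(\theta,\rho)$ captures all but $1/\sqrt{N}$ mass. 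On the good event the integrand is bounded by $\max_{\|\theta'-\theta\|<\rho}\mathcal{L}_{\mathcal{S}}^{\mathrm{un}}(\theta')$; on the complement it is crudely bounded by $L$. This contributes the $2L/\sqrt{N}$ slack. Assembling everything with the population-level assumption $\mathcal{L}_{\mathcal{D}_{\mathrm{un}}}^{\mathrm{un}}(\theta) \le \mathbb{E}_{\theta'\sim\mathcal{N}(\theta,\sigma^2\mathbb{I}_T)}[\mathcal{L}_{\mathcal{D}_{\mathrm{un}}}^{\mathrm{un}}(\theta')]$ yields the stated inequality.

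The main obstacle is the discretization bookkeeping: ensuring that $j^\star$ is well-defined and nonnegative in the relevant regime, that the $\log(N+T)$ absorbs both $\log(\pi^2/6)$ and $\log(1+T+4N)$ from $\log(1/\delta_{j^\star})$, and that the optimal-prior approximation error $e^{1/T}$ collapses into a harmless $O(1)$ additive constant inside the big-$O$ term. Everything else is routine Gaussian calculus.
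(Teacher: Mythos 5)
Your proposal is correct and follows essentially the same route as the paper: the paper derives this statement directly from its Theorem A.2, whose proof is exactly the argument you outline --- the bounded InfoNCE loss $L=\frac{2}{\tau}+\log(1+\beta)$, the PAC-Bayes bound of \cite{JMLR:v17:15-290} with Gaussian prior/posterior and $\Psi(\beta,N)\le \beta^2L^2/(8N)$, the geometric grid of priors with the $\delta_j = 6\delta/(\pi^2 j^2)$ union bound and the cutoff $\|\theta\|^2\ge T\sigma^2(e^{4N/T}-1)$, the choice $\beta=\sqrt{N}$, and the chi-square tail argument with $\sigma=\rho/(\sqrt{T}+\sqrt{\log N})$ converting the Gaussian-smoothed empirical loss into the $\rho$-ball maximum plus a $2L/\sqrt{N}$ slack. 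No gaps beyond those already present in the paper's own treatment.
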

\begin{proof}
    The proof is a direct consequence of Theorems \ref{thm:1} and \ref{thm:2}.
\end{proof}

\begin{thm}
\label{thm:practical_sup_un}The following inequality holds

\begin{align} \label{shift_1}
 & \mathcal{L}_{\mathcal{D}_\mathrm{sup}}^\mathrm{sup}\left(\theta\right)\leq\mathcal{L}_{\mathcal{D}_\mathrm{un}}^\mathrm{un}\left(\theta,\tilde{p}_{\mathrm{pos}}\right)-O\left(\frac{1}{\sqrt{K}}\right)
 +\mathcal{L}_\mathrm{shift}\left(\tilde{p}_{\mathrm{pos}},p_{\mathrm{pos}}\right)-\log\beta-O\Big(\frac{1 }{\beta}\Big), 
\end{align} 
where $\mathcal{L}_\mathrm{shift}\left(\tilde{p}_{\mathrm{pos}},p_{\mathrm{pos}}\right)$
is defined as
{\smaller{}
\[
\tau^{-1}\sum_{c=1}^{M}\pi_{c}\mathbb{E}_{x\sim p_{c}}\left[\Vert\mathbb{E}_{x^{+}\sim p_{c}}\left[f_{\theta}\left(x^{+}\right)\right]-\mathbb{E}_{t\sim\mathcal{T},x^{+}=t\left(x\right)}\left[f_{\theta}\left(x^{+}\right)\right]\Vert\right]. 
\]
}
\end{thm}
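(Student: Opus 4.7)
The plan is to route through the auxiliary surrogate
\[
\overline{\mathcal{L}}_{\mathcal{D}_{\mathrm{un}}}^{\mathrm{un}}(\theta, q) = \mathbb{E}_{(x,x^+)\sim q}\!\left[-\tfrac{f_\theta(x)\cdot f_\theta(x^+)}{\tau}\right] + \mathbb{E}_{x \sim p_{\mathrm{data}}}\!\left[\log \mathbb{E}_{x^-\sim p_{\mathrm{data}}}\!\left[\exp\!\left\{\tfrac{f_\theta(x) \cdot f_\theta(x^-)}{\tau}\right\}\right]\right]
\]
already introduced in the proof of Theorem \ref{thm:1}, exploiting the fact that only its first summand depends on the positive-pair distribution $q$. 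This way, the shift between $p_{\mathrm{pos}}$ and $\tilde p_{\mathrm{pos}}$ can be cleanly isolated there. The proof will proceed in three steps: (i) reuse the ideal-case argument to bound $\mathcal{L}_{\mathcal{D}_{\mathrm{sup}}}^{\mathrm{sup}}(\theta)\leq \overline{\mathcal{L}}_{\mathcal{D}_{\mathrm{un}}}^{\mathrm{un}}(\theta, p_{\mathrm{pos}})$; (ii) swap $p_{\mathrm{pos}}$ for $\tilde p_{\mathrm{pos}}$ inside $\overline{\mathcal{L}}$, paying the distribution-shift penalty; (iii) relate $\overline{\mathcal{L}}_{\mathcal{D}_{\mathrm{un}}}^{\mathrm{un}}(\theta, \tilde p_{\mathrm{pos}})$ back to $\mathcal{L}_{\mathcal{D}_{\mathrm{un}}}^{\mathrm{un}}(\theta, \tilde p_{\mathrm{pos}})$ by reusing Step 2 of the Theorem \ref{thm:1} proof.

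For step (i), I will invoke Steps 1 and 3 of the proof of Theorem \ref{thm:1} verbatim with $\bar W_c = \mathbb{E}_{x\sim p_c}[f_\theta(x)]$. These steps genuinely use class-conditional independence $(x,x^+)\sim p_c\otimes p_c$ of the \emph{ideal} distribution (via Jensen's inequality on the log-sum-exp), which is why I must start the chain from $p_{\mathrm{pos}}$ rather than $\tilde p_{\mathrm{pos}}$; they output $\mathcal{L}_{\mathcal{D}_{\mathrm{sup}}}^{\mathrm{sup}}(\theta,\bar W)\leq \overline{\mathcal{L}}_{\mathcal{D}_{\mathrm{un}}}^{\mathrm{un}}(\theta, p_{\mathrm{pos}})$, and then the infimum over $W$ promotes $\bar W$ to the optimal classifier.

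For step (ii), since the negative-sample integral in $\overline{\mathcal{L}}$ does not depend on $q$, the gap collapses to
\[
\tfrac{1}{\tau}\bigl(\mathbb{E}_{\tilde p_{\mathrm{pos}}}[f_\theta(x)\!\cdot\! f_\theta(x^+)] - \mathbb{E}_{p_{\mathrm{pos}}}[f_\theta(x)\!\cdot\! f_\theta(x^+)]\bigr).
\]
Conditioning on the class label $c$ and pulling the inner product through the expectation in $x^+$, the two expectations factor as $\sum_c \pi_c \mathbb{E}_{x\sim p_c}[f_\theta(x)\!\cdot\! \mu_c(x)]$, where $\mu_c(x)$ is $\mathbb{E}_{x^+\sim p_c}[f_\theta(x^+)]$ in the ideal case and $\mathbb{E}_{t\sim\mathcal{T}}[f_\theta(t(x))]$ in the practical case. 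Cauchy--Schwarz on each summand, combined with the normalization $\|f_\theta(\cdot)\|\leq 1$, delivers exactly the claimed $\mathcal{L}_{\mathrm{shift}}(\tilde p_{\mathrm{pos}}, p_{\mathrm{pos}})$. For step (iii), I will note that Step 2 of the proof of Theorem \ref{thm:1} uses only boundedness of $Z=\exp\{f_\theta(x)\!\cdot\! f_\theta(x^+)/\tau\}$, boundedness of $\alpha_x$, and the variance of the $K$-sample mean over negatives; none of those facts invoke class-conditional independence, so the same derivation applies verbatim to $\tilde p_{\mathrm{pos}}$ and gives $\overline{\mathcal{L}}_{\mathcal{D}_{\mathrm{un}}}^{\mathrm{un}}(\theta, \tilde p_{\mathrm{pos}}) \leq \mathcal{L}_{\mathcal{D}_{\mathrm{un}}}^{\mathrm{un}}(\theta, \tilde p_{\mathrm{pos}}) - \log\beta - O(1/\beta) - O(1/\sqrt{K})$. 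Concatenating (i)--(iii) yields the theorem.

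The main obstacle is step (ii): I must orient the Cauchy--Schwarz bound so that $\overline{\mathcal{L}}(\theta, p_{\mathrm{pos}})$ sits on the \emph{upper} side (using the absolute value of the inner-product difference to obtain a two-sided bound), and I need the boundedness hypothesis $\|f_\theta(\cdot)\|\leq 1$ to match the $1/\tau$ scaling in the stated $\mathcal{L}_{\mathrm{shift}}$. All other steps amount to bookkeeping on top of the already-proved Theorem \ref{thm:1}.
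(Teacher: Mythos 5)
Your proposal is correct and follows essentially the same route as the paper's own proof: bound $\mathcal{L}_{\mathcal{D}_\mathrm{sup}}^\mathrm{sup}(\theta)$ by the surrogate $\overline{\mathcal{L}}_{\mathcal{D}_\mathrm{un}}^\mathrm{un}(\theta,p_{\mathrm{pos}})$ from the ideal-case argument, swap $p_{\mathrm{pos}}$ for $\tilde{p}_{\mathrm{pos}}$ in the alignment term at the cost of $\mathcal{L}_\mathrm{shift}$ via Cauchy--Schwarz with $\Vert f_\theta\Vert\leq 1$, and then reuse the $\overline{\mathcal{L}}$-versus-$\mathcal{L}$ comparison (the $-\log\beta-O(1/\beta)-O(1/\sqrt{K})$ step) for $\tilde{p}_{\mathrm{pos}}$. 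Your explicit remark that this last comparison uses only boundedness and $K$-sample concentration, not the class-conditional structure of the positive distribution, and your keeping $\mu_c(x)$ inside the expectation over $x$ before applying Cauchy--Schwarz, are careful touches that the paper glosses over but that do not change the argument.
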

\begin{proof}
According to Theorem \ref{thm:1}, we have the following
\begin{align} \label{eq:4_1}
\Lc_{\Dc_{\supp}}^{\supp}(\theta)\leq \Lc_{\Dc_{\supp}}^{\supp}(\theta,\bW)\leq \overline{\Lc}_{\Dc_{\text{un}}}^{\text{un}}(\theta)
\end{align}

\begin{align} \label{eq:4_2}
\mathcal{L}_{\mathcal{D}_{\text{un}}}^{\text{un}}(\theta, p_{\mathrm{pos}}) -\overline{\mathcal{L}}_{\mathcal{D}_{\text{un}}}^{\text{un}}(\theta, p_{pos}) =  O\big(\frac{1}{\beta} \big) + O\big(\frac{1}{\sqrt{K}} \big).
\end{align}

\begin{align} \label{eq:4_3}
    \Lc_{\Dc_{\supp}}^{\supp}(\theta) \leq \Lc_{\Dc_{\unm}}^{\unm}(\theta, p_{pos}) - O\Big(\frac{1}{\sqrt{K}} \Big) -\log(\beta)- O\Big(\frac{1}{\beta}\Big).
\end{align}

We now bound the gap $\left|\bar{\mathcal{L}}_{\mathcal{D}_{un}}^{un}\left(\theta,p_{\text{pos}}\right)-\bar{\mathcal{L}}_{\mathcal{D}_{un}}^{un}\left(\theta,p_{\text{pos}}\right)\right|$ as

\begin{align} \label{eq:4_4}
\left|\overline{\mathcal{L}}_{\mathcal{D}_{\mathrm{un}}}^{\mathrm{un}}\left(\theta,p_{\text{pos}}\right)-\overline{\mathcal{L}}_{\mathcal{D}_{\mathrm{un}}}^{\mathrm{un}}\left(\theta,p_{\text{pos}}\right)\right| & =\left|\mathbb{E}_{\left(x,x^{+}\right)\sim p_{\text{pos}}}\left[-\frac{1}{\tau}f_{\theta}\left(x\right).f_{\theta}\left(x^{+}\right)\right]-\mathbb{E}_{\left(x,x^{+}\right)\sim\tilde{p}_{\text{pos}}}\left[-\frac{1}{\tau}f_{\theta}\left(x\right).f_{\theta}\left(x^{+}\right)\right]\right| \nonumber\\
= & \frac{1}{\tau}\left|\mathbb{E}_{\left(x,x^{+}\right)\sim p_{\text{pos}}}\left[f_{\theta}\left(x\right).f_{\theta}\left(x^{+}\right)\right]-\mathbb{E}_{\left(x,x^{+}\right)\sim\tilde{p}_{\text{pos}}}\left[f_{\theta}\left(x\right).f_{\theta}\left(x^{+}\right)\right]\right| \nonumber\\
= & \frac{1}{\tau}\left|\sum_{c=1}^{M}\pi_{c}\mathbb{E}_{x\sim p_{c}}\left[f_{\theta}\left(x\right)\right]\cdot\left\{ \mathbb{E}_{x^{+}\sim p_{c}}\left[f_{\theta}\left(x^{+}\right)\right]-\mathbb{E}_{t\sim\mathcal{T},x^{+}=t(x)}\left[f_{\theta}\left(x^{+}\right)\right]\right\} \right| \nonumber\\
\leq & \frac{1}{\tau}\sum_{c=1}^{M}\pi_{c}\left|\mathbb{E}_{x\sim p_{c}}\left[f_{\theta}\left(x\right)\right]\cdot\left\{ \mathbb{E}_{x^{+}\sim p_{c}}\left[f_{\theta}\left(x^{+}\right)\right]-\mathbb{E}_{t\sim\mathcal{T},x^{+}=t(x)}\left[f_{\theta}\left(x^{+}\right)\right]\right\} \right| \nonumber\\
\leq & \frac{1}{\tau}\sum_{c=1}^{M}\pi_{c}\Vert\mathbb{E}_{x\sim p_{c}}\left[f_{\theta}\left(x\right)\right]\Vert\Vert\mathbb{E}_{x^{+}\sim p_{c}}\left[f_{\theta}\left(x^{+}\right)\right]-\mathbb{E}_{t\sim\mathcal{T},x^{+}=t(x)}\left[f_{\theta}\left(x^{+}\right)\right]\Vert \nonumber\\
\leq & \frac{1}{\tau}\sum_{c=1}^{M}\pi_{c}\mathbb{E}_{x\sim p_{c}}\left[\Vert f_{\theta}\left(x\right)\Vert\right]\Vert\mathbb{E}_{x^{+}\sim p_{c}}\left[f_{\theta}\left(x^{+}\right)\right]-\mathbb{E}_{t\sim\mathcal{T},x^{+}=t(x)}\left[f_{\theta}\left(x^{+}\right)\right]\Vert \nonumber\\
= & \frac{1}{\tau}\sum_{c=1}^{M}\pi_{c}\Vert\mathbb{E}_{x^{+}\sim p_{c}}\left[f_{\theta}\left(x^{+}\right)\right]-\mathbb{E}_{t\sim\mathcal{T},x^{+}=t(x)}\left[f_{\theta}\left(x^{+}\right)\right]\Vert=\mathcal{L}_{\text{shift}}\left(\tilde{p}_{\text{pos}},p_{\text{pos}}\right).
\end{align}

Combining \ref{eq:4_1}, \ref{eq:4_2}, \ref{eq:4_3}, and \ref{eq:4_4}, we reach the conclusion.

\end{proof}

\begin{thm}
\label{thm:practical_sharpness} For $0 < \delta < 1$, with the probability at least $1-\delta$
over the random choices $\mathcal{S}\sim\mathcal{D}_\mathrm{un}^{N}$, we
have the following inequality 
{\small{}
\begin{align*}
 & \mathcal{L}_{\mathcal{D}_\mathrm{sup}}^\mathrm{sup}\left(\theta\right)\leq\max_{\theta':\norm{\theta'-\theta}<\rho}\mathcal{L}_{\mathcal{S}}^\mathrm{un}\left(\theta',\tilde{p}_{\mathrm{pos}}\right)-O\left(\frac{1}{\sqrt{K}}\right)+\frac{1}{\sqrt{N}}\Bigg[\frac{T}{2}\log\left(1+\frac{\norm{\theta}^{2}}{T\sigma^{2}}\right)+\log\frac{1}{\delta}+\frac{L^{2}}{8}+2L + O\Big(\log(N+T) \Big)\Bigg] \\
&\qquad \qquad \qquad +\mathcal{L}_\mathrm{shift}\left(\tilde{p}_{\mathrm{pos}},p_{\mathrm{pos}}\right)-\log\beta-O\Big(\frac{1 }{\beta}\Big)
\end{align*}
} under the condition: for any $\sigma >0$, $\Lc_{\Dc_{\mathrm{un}}}^{\mathrm{un}}(\theta) \leq \E_{\theta^{\prime}\sim \Nc(\theta,\sigma^2 \mathbb{I}_T)}$
where $L=\frac{2}{\tau}+\log(1+\beta)$, $T$ is the number of parameters
in $\theta$, and $\sigma=\frac{\rho}{\sqrt{T}+\sqrt{\log\left(N\right)}}$.
\end{thm}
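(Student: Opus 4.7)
The plan is to compose the two pieces already available in the excerpt: Theorem \ref{thm:practical_sup_un}, which relates $\mathcal{L}_{\mathcal{D}_\mathrm{sup}}^\mathrm{sup}(\theta)$ to the unsupervised loss $\mathcal{L}_{\mathcal{D}_\mathrm{un}}^\mathrm{un}(\theta,\tilde{p}_{\mathrm{pos}})$ with an extra shift term, and Theorem \ref{thm:2} (via Theorem \ref{thm:sharpness_ideal}), which provides the sharpness-aware PAC-Bayesian bound on the population InfoNCE loss. Theorem \ref{thm:practical_sup_un} is deterministic, so all of the randomness in $\mathcal{S}\sim\mathcal{D}_\mathrm{un}^N$ is absorbed by the PAC-Bayes step.

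First I would invoke Theorem \ref{thm:practical_sup_un} to obtain the deterministic bound
\begin{align*}
\mathcal{L}_{\mathcal{D}_\mathrm{sup}}^\mathrm{sup}(\theta)
&\leq \mathcal{L}_{\mathcal{D}_\mathrm{un}}^\mathrm{un}(\theta,\tilde{p}_{\mathrm{pos}})
- O\!\left(\tfrac{1}{\sqrt{K}}\right) + \mathcal{L}_\mathrm{shift}(\tilde{p}_{\mathrm{pos}},p_{\mathrm{pos}})
- \log\beta - O\!\left(\tfrac{1}{\beta}\right).
\end{align*}
The residual quantity to be controlled is $\mathcal{L}_{\mathcal{D}_\mathrm{un}}^\mathrm{un}(\theta,\tilde{p}_{\mathrm{pos}})$. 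I would then replay the proof of Theorem \ref{thm:2} verbatim, but with the positive-pair distribution taken to be $\tilde{p}_{\mathrm{pos}}$ in the definitions of both $\mathcal{L}_{\mathcal{D}_\mathrm{un}}^\mathrm{un}$ and $\mathcal{L}_\mathcal{S}^\mathrm{un}$. The only facts used in that proof about the loss are (i) the uniform upper bound $L=\tfrac{2}{\tau}+\log(1+\beta)$ on the per-sample InfoNCE loss, and (ii) that the draws defining $\mathcal{S}$ are i.i.d. from the sampling distribution under consideration; both are preserved under swapping $p_{\mathrm{pos}}\mapsto\tilde{p}_{\mathrm{pos}}$, since the bound $|\ell|\leq L$ depends only on $\|f_\theta\|\le 1$ and the temperature-$\tau$, not on where $(x,x^+)$ comes from. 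Hence with probability at least $1-\delta$ over $\mathcal{S}\sim\mathcal{D}_\mathrm{un}^N$,
\begin{align*}
\mathcal{L}_{\mathcal{D}_\mathrm{un}}^\mathrm{un}(\theta,\tilde{p}_{\mathrm{pos}})
&\leq \max_{\theta':\|\theta'-\theta\|<\rho}\mathcal{L}_\mathcal{S}^\mathrm{un}(\theta',\tilde{p}_{\mathrm{pos}})
+ \tfrac{1}{\sqrt{N}}\Bigl[\tfrac{T}{2}\log\bigl(1+\tfrac{\|\theta\|^2}{T\sigma^2}\bigr) + \log\tfrac{1}{\delta} + \tfrac{L^2}{8} + 2L + O(\log(N+T))\Bigr],
\end{align*}
with $\sigma=\rho/(\sqrt{T}+\sqrt{\log N})$, exactly as in Theorem \ref{thm:sharpness_ideal}.

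Plugging this high-probability bound into the deterministic inequality from step one, and noting that the $O(1/\sqrt{K})$, $-\log\beta$, $-O(1/\beta)$ and $\mathcal{L}_\mathrm{shift}$ terms simply pass through, immediately yields the claimed inequality. There is no real obstacle here beyond bookkeeping: the composition is clean because one ingredient is deterministic and the other introduces the single probability-$(1-\delta)$ event, so there is no union bound to pay for. The only subtle point worth double-checking is that the PAC-Bayes argument of Theorem \ref{thm:2}, which was written for $p_{\mathrm{pos}}$, really does depend on the positive-pair law only through the uniform loss bound $L$ and the i.i.d.\ structure of $\mathcal{S}$; once that is observed, the proof is essentially a substitution, and the final statement follows.
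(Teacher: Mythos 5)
Your proposal is correct and follows essentially the same route as the paper: the appendix proof likewise reruns the sharpness-aware PAC-Bayes bound of the ideal-distribution theorem with $\tilde{p}_{\mathrm{pos}}$ in place of $p_{\mathrm{pos}}$ (valid because only the uniform bound $L$ and the i.i.d.\ structure of $\mathcal{S}$ are used) and then combines it with the deterministic bound of Theorem \ref{thm:practical_sup_un}. Your extra remark about why the substitution is legitimate is exactly the observation the paper relies on implicitly.
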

\begin{proof}
Using the same proof as in Theorem \ref{thm:sharpness_ideal} for $\tilde{p}_{pos}$, we reach

\begin{align*}
&\mathcal{L}_{\mathcal{D}_{\mathrm{un}}}^{\mathrm{un}}\left(\theta,\tilde{p}_{\mathrm{pos}}\right)  \leq\max_{\theta':\Vert\theta'-\theta\Vert<\rho}\mathcal{L}_{\mathcal{S}}^{\mathrm{un}}\left(\theta,\tilde{p}_{\mathrm{pos}}\right)+\quad \frac{1}{\sqrt{N}}\Biggl[  \frac{T}{2}\log\left(1+\frac{\norm{\theta}^{2}}{T\sigma^{2}}\right)+\log\frac{1}{\delta}+\frac{L^{2}}{8}+2L + O\Big(\log(N+T)\Big)\Biggr].
\end{align*}

Further combining with Theorem \ref{thm:practical_sup_un}, we reach the conclusion.

\end{proof}
\section{Additional Experiments on STL-10}

Our previous experiments assume all of our available data is labeled, and thus both the contrastive learning step and the supervised classification training step use the same set of data. However, the realistic scenario would be that we have much less labeled data than unlabeled ones. In that case, we would train our contrastive learning feature extractor on both labeled and unlabeled data, and then only train the final classifier using the labeled one. To evaluate our method in this setting, we opt for using STL-10, a dataset separated into 3 subsets: train, test, and unlabeled. Using the above training method, we run all experiments with the same hyperparameters in the main paper, batch size 256 and for 1000 epochs. Similar to the previous results, DCL and HCL both give slightly worse result comparing to the SimCLR baseline, with HCL having noticeably better robust accuracy than DCL (near 6\%). Meanwhile, our method outperforms all other methods on every metrics; specifically, comparing with standard SimCLR, SSA-CLR improves results by 1\% in clean accuracy, and 5\% in robust accuracy.

\begin{table}[t]
    \centering
    \caption{Test set accuracy from linear evaluations of SSL methods on STL-10 (higher is better).}
    \begin{tabularx}{\linewidth}{*{4}Y}
        \toprule
        Method & {Top-1} & {Top-5} & Robust \\
        \midrule
        SimCLR & 88.85\% & 99.60\% & 54.28\% \\
        Debiased & 86.38\% & 99.23\% & 46.35\% \\
        Hard Neg. & 86.46\% & 99.40\% & 52.28\% \\
        SSA-CLR & \textbf{89.59\%} & \textbf{99.64\%} & \textbf{59.11\%} \\
        \bottomrule
    \end{tabularx}
    \label{tab:imagenette}
\end{table}

\end{document}